\def\eqref#1{equation~\ref{#1}}
\def\1{\bm{1}}
\DeclareMathAlphabet{\mathsfit}{\encodingdefault}{\sfdefault}{m}{sl}
\SetMathAlphabet{\mathsfit}{bold}{\encodingdefault}{\sfdefault}{bx}{n}
\newtheorem{theorem}{Theorem}[section]
\newtheorem{lemma}[theorem]{Lemma}
\newtheorem{corollary}[theorem]{Corollary}
\title{On the Learning Dynamics of \\ Deep Neural Networks}
\author{Remi Tachet des Combes$\mathbf{^1}$, Mohammad Pezeshki$\mathbf{^{1,2}}$, Samira Shabanian$\mathbf{^1}$, \\ 
\textbf{Aaron Courville$\mathbf{^2}$, Yoshua Bengio$\mathbf{^2}$} \\
$^1$Microsoft Research Montreal \\
$^2$Universite de Montreal, MILA
}
\begin{document}

\maketitle

\begin{abstract}
While a lot of progress has been made in recent years, the dynamics of learning in deep nonlinear neural networks remain to this day largely misunderstood. In this work, we study the case of binary classification and prove various properties of learning in such networks under strong assumptions such as linear separability of the data. Extending existing results from the linear case, we confirm empirical observations by proving that the classification error also follows a sigmoidal shape in nonlinear architectures. We show that given proper initialization, learning expounds parallel independent modes and that certain regions of parameter space might lead to failed training. We also demonstrate that input norm and features' frequency in the dataset lead to distinct convergence speeds which might shed some light on the generalization capabilities of deep neural networks. We provide a comparison between the dynamics of learning with cross-entropy and hinge losses, which could prove useful to understand recent progress in the training of generative adversarial networks. Finally, we identify a phenomenon that we baptize \textit{gradient starvation} where the most frequent features in a dataset prevent the learning of other less frequent but equally informative features.
\end{abstract}

\section{Introduction}

Due to extremely complex interactions between millions of parameters, nonlinear activation functions and optimization techniques, the dynamics of learning observed in deep neural networks remain much of a mystery to this day. What principles govern the evolution of the neural network weights? Why does the training error evolve as it does? How do data and optimization techniques like stochastic gradient descent interact? Where does the \textit{implicit regularization} of deep neural networks trained with stochastic gradient descent come from? Shedding some light on those questions would make training neural networks more understandable, and potentially pave the way to better techniques.

It is commonly accepted that learning is composed of alternating phases: plateaus where the error remains fairly constant and periods of fast improvement where a lot of progress is made over the course of few epochs \citep{conf/cogsci/SaxeMG13}. Theoretic explanations of that phenomenon exist in the case of regression on linear neural networks \citep{saxe2015deep} but extensions to the nonlinear case \citep{Heskes,raghu2017svcca,DBLP:journals/corr/abs-1802-06509} fail to provide analytical solutions.

It has been observed in countless experiments that deep networks present strong generalization abilities. Those abilities are however difficult to ground in solid theoretical foundations. The fact that deep network have millions of parameters -- a number sometimes orders of magnitude larger than the dataset size -- contradicts the expectations set by classic statistical learning theory on the necessity of regularizers \citep{vapnik,poggio}. This observation drove \citet{understandingZhang} to suggest the existence of an \textit{implicit regularization} happening during the training of deep neural networks. \citet{Advani2017HighdimensionalDO} show that the dynamics of gradient descent can protect against overfitting in large networks. \citet{kleinbergSGD} also offer some explanations of the phenomenon but understanding its roots remains an open problem.

In this work, we study the learning dynamics of a deep nonlinear neural network -- \textit{i.e.} how its weights and outputs evolve throughout learning -- trained on a standard classification task using two different losses: the cross-entropy and the hinge loss. We mainly focus on binary classification, some of the results and properties can however be extended to the multi-class case. The questions we address in Sections~\ref{binary}, \ref{hinge} and \ref{starvation} respectively can be summarized as follows:

\begin{center}
    \textit{How does the confidence of a classifier evolve throughout learning?} \\
    \textit{How does the loss used during training impact its dynamics?} \\
    \textit{Which properties of the features present in a dataset impact learning, and how?} 
\end{center}

\textbf{Independent mode learning} We show that, similarly to the case of linear networks and under certain initial conditions, learning happens independently between different classes, \textit{i.e.} classes induce a partition of the network activations, corresponding to orthogonal modes of the data.

\textbf{Learning dynamics} We prove that in accordance to experimental findings, the hidden activations and the classification error of the network show a sigmoidal shape with slow learning at the beginning followed by fast saturation of the curve. We also characterize a region in the initialization space where learning is frozen or eventually dies out.

\textbf{Hinge loss} We study how using the hinge loss impacts learning and quantitatively compare it to the classic cross-entropy loss. We show that the hinge loss allows one to solve a classification task much faster, by providing strong gradients no matter how close to convergence the neural network is.

\textbf{Gradient starvation} Finally, we identify a phenomenon that we call \emph{gradient starvation} where the most frequent features present in the dataset \emph{starve} the learning of other very informative but less frequent features. Gradient starvation occurs naturally when training a neural network with gradient descent and might be part of the explanation as to why neural networks generalize so well. They intrinsically implement a variant of Occam's razor \citep{occam}: \textit{the simplest explanation is the one they converge to first}.

\section{Setup and notations}\label{notations}

We are interested in a simple binary classification task, solved by training a deep neural network with gradient descent. This simple setup encompasses for instance the training of generative adversarial networks discriminators. Some of our results extend to multi-class classification, but, for the sake of conciseness, that case is treated in Appendix \hyperlink{app-A}{A}. We let $D = \{(x_i, l_i)\}_{1 \leq i \leq n} \subset \mathbb{R}^d \times \{1,2\}$ denote our dataset of vectors and labels. The classifier we consider is a simple neural network with one hidden layer of $h$ neurons and a ReLU non-linearity (see Fig.~\ref{fig:architecture}). 
\begin{wrapfigure}{r}{0.43\textwidth}
  \begin{center}
    \vspace{-0.2cm}
    \includegraphics[width=0.43\textwidth]{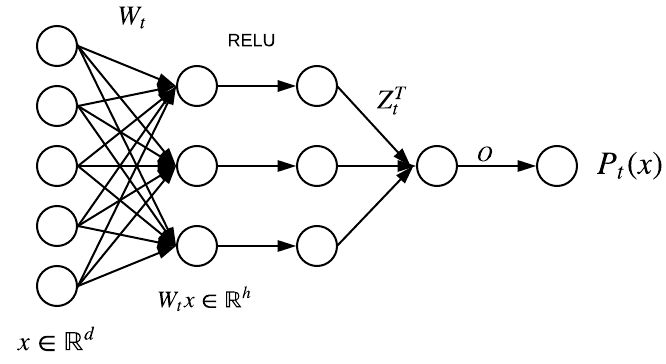}
  \end{center}
  \caption{Network Architecture}\label{fig:architecture}
\end{wrapfigure}
The output of the network is passed through a function denoted $o$ which is either the sigmoid $\sigma$ in the binary cross-entropy case or the identity in the hinge loss case. The full function can be written as
\begin{equation*}
   P_t(x) := o(u_t(x)) := o(Z_t^T (W_t x)_+),
\end{equation*}
where $W_t$ is an $h \times d$ matrix and $Z_t$ a vector of length $h$. In the $C$-class case, $Z_t$ is instead a $C \times h$ matrix and $o$ the softmax function. An extension to deeper networks can be found in Appendix \hyperlink{app-B}{B}. $W_t$ and $Z_t$ are the parameters of the neural network. The subscript $+$ (resp. $t$) denotes the positive part of a real number (resp. the state of the element at time step $t$ of training). The superscript $T$ stands for the transpose operation.  In the case of cross-entropy, $P_t(x)$ represents the probability that $x$ belongs to class 1, for the hinge loss, $P_t(x)$ is trained to reach $\{1, -1\}$ for classes $1$ and $2$. For a given class $k$, we let $D_k$ denote the set of vectors belonging to it. We make the assumption

\hspace{1.5cm} \hypertarget{H1}{(H1)} For any $x, x' \in D_k$, $x^T x' > 0$. For any $x \in D_1$ and $x' \in D_{2}$, $x^T x' \leq 0 $.

It implies linear separability of the data, an assumption often necessary in theoretical studies \citep{separable3,couillet,separable1,separable2} and the positioning of the origin between the two sets. It is a very strong assumption which admittedly bypasses a large part of the deep learning dynamics. Nevertheless, it allows the discovery of interesting properties and is potentially a first step towards understanding behaviors observed in more general settings. 

\section{Learning dynamics for binary cross-entropy}\label{binary}

In this section, we focus on the case of the binary cross-entropy loss
\begin{equation*}
    L_{BCE}(W_t, Z_t; x) = - \mathbbm{1}_{x \in D_1} \log(\sigma(Z_t^T (W_t x)_+)) - \mathbbm{1}_{x \in D_2} \log(1 - \sigma(Z_t^T (W_t x)_+)),
\end{equation*}
and train our network using stochastic gradient descent to minimize $L_{BCE}$.

\subsection{Independent modes of learning}\label{modes}

Our first lemma states that over the course of training and under suitable initialization, the active neurons of the hidden layer remain the same for each datapoint, and the coordinates of $Z_t$ remain of the same sign. To prove it, we let $w^i_t$ denote the $i$-th row of $W_t$ and make the additional assumptions: there exists a partition $\{\mathcal{I}_1, \mathcal{I}_2\}$ of $\{1,  \ldots, h\}$ such that with $k \in \{1, 2\}$

\hspace{1.5cm} \hypertarget{H2}{(H2)} For any $i \in \mathcal{I}_k$, $x \in D_k$ and $x' \notin D_k$, $w^i_0 x > 0$ and $w^i_0 x' \leq 0$.

\hspace{1.5cm} \hypertarget{H3}{(H3)} The $i$-th coordinate of $Z_0$ is positive if $i \in \mathcal{I}_1$, negative otherwise.

\noindent
Assumption \hyperlink{H2}{(H2)} states that at the beginning of training, data points from different classes do not activate the same neurons. It is an analogue to the orthogonal initialization used in \citet{saxe14}. In Appendix \ref{app-A6}, we show that relaxing it hints towards an extended period of slow learning in the early stages of training. \hyperlink{H3}{(H3)} is introduced for Lemma \ref{modes_lemma} and Theorem \ref{dyn_bce} but will be relaxed later.
\begin{lemma}\label{modes_lemma}
For any $k \in \{1, 2\}$, $x \in D_k$ and $t \geq 0$, the only non-negative elements of $W_t x$ are the ones with an index $i \in \mathcal{I}_k$. The signs of the coordinates of $Z_t$ remain the same throughout training.
\end{lemma}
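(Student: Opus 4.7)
My plan is to prove both assertions of the lemma simultaneously by induction on the SGD step $t$, maintaining the slightly strengthened invariant that (a) $w_t^i \cdot x > 0$ for all $x \in D_k$ and $i \in \mathcal{I}_k$, (b) $w_t^i \cdot x \leq 0$ for all $x \in D_k$ and $i \notin \mathcal{I}_k$, and (c) the coordinates of $Z_t$ keep the signs specified by \hyperlink{H3}{(H3)}. The base case $t=0$ is exactly \hyperlink{H2}{(H2)} and \hyperlink{H3}{(H3)}. Strengthening (a) to a strict inequality is needed so that the ReLU indicator of the active neurons stays on and the inductive dynamics do not freeze.

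For the inductive step, suppose the invariant holds at time $t$ and consider an SGD update on a single sample $y \in D_{k'}$. Chain-ruling through $L_{BCE}$ gives
\begin{align*}
\frac{\partial L_{BCE}}{\partial w_t^i}(y) &= \bigl(\sigma(u_t(y)) - \mathbbm{1}_{y \in D_1}\bigr)\,Z_t^i\,\mathbbm{1}_{w_t^i \cdot y > 0}\,y, \\
\frac{\partial L_{BCE}}{\partial Z_t^i}(y) &= \bigl(\sigma(u_t(y)) - \mathbbm{1}_{y \in D_1}\bigr)\,(W_t y)_+^i,
\end{align*}
with a common scalar prefactor that is negative when $k'=1$ and positive when $k'=2$. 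By invariant (b), $(W_t y)^i \leq 0$ for $i \in \mathcal{I}_{3-k'}$, so the $_+$ and indicator factors vanish and both $w_t^i$ and $Z_t^i$ are left untouched; (a)-(b)-(c) are trivially preserved for these indices.

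For $i \in \mathcal{I}_{k'}$, invariant (a) gives strict firing of the ReLU, so the updates are nonzero. A sign check---prefactor negative and $Z_t^i > 0$ when $k'=1$, prefactor positive and $Z_t^i < 0$ when $k'=2$---shows that in both cases the update has the form
\[
Z_{t+1}^i = Z_t^i + \beta_i, \qquad w_{t+1}^i = w_t^i + \alpha_i\,y,
\]
with $\alpha_i > 0$ and $\beta_i$ sharing the sign of $Z_t^i$, so (c) is preserved. To propagate (a) and (b), I take any $x \in D_k$ and compute
\[
w_{t+1}^i \cdot x \;=\; w_t^i \cdot x + \alpha_i\,(y \cdot x),
\]
then invoke \hyperlink{H1}{(H1)}: $y \cdot x > 0$ when $k = k'$ and $y \cdot x \leq 0$ otherwise. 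Combined with the inductive bounds on $w_t^i \cdot x$, this gives $w_{t+1}^i \cdot x > 0$ in the same-class case and $w_{t+1}^i \cdot x \leq 0$ in the cross-class case, closing the induction.

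The main obstacle is that the two stated properties are tightly coupled and cannot be treated separately: the $w$-update is a positive multiple of $y$ only because the signs of $Z_t$ match the partition, while the signs of $Z_t$ are preserved only because the ReLU restricts contributions to the correct side of the partition---so (c) feeds (a)-(b) at each step and vice versa. The one technical nuisance, the ReLU kink at zero, is resolved by the standard convention $\partial(z)_+/\partial z = \mathbbm{1}_{z>0}$ consistent with $(0)_+ = 0$; the strict positivity built into invariant (a) guarantees the kink is never active on samples whose derivative actually matters in the induction.
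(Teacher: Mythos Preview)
Your proof is correct and follows essentially the same inductive strategy as the paper: both compute the SGD updates, use the ReLU to freeze the off-class rows and coordinates, and then invoke \hyperlink{H1}{(H1)} to show that each update moves $w_t^i \cdot x$ in the right direction for every $x$. Your version is slightly more explicit in carrying the strict inequality $w_t^i \cdot x > 0$ as part of the invariant and in remarking on the ReLU subgradient convention, but the argument is otherwise the same.
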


This lemma proves that updates to the parameters of our network are fully decoupled from one class to the other. An update for a data point in $D_k$ will only influence the corresponding \textit{active} rows and elements of $W_t$ and $Z_t$. This "independent mode learning" is an equivalent of the results by \citet{saxe14} in a non-linear network trained on the cross entropy loss. The proof of the lemma and its extension to $N-1$ hidden layers and multi-class classification can be found in Appendix \hyperlink{app-A}{A}.

\subsection{Learning dynamics}\label{dynamics}

We are now interested in the actual dynamics of learning, and move from discrete updates to continuous ones by considering an infinitesimal learning rate $\alpha$ \citep{Heskes}. Lemma \ref{modes_lemma} can easily be extended to this setting. For simplicity we assume $h=2$, but similar results hold for arbitrary $h$ (see Appendix \ref{app-A4}). For the moment, we maintain the assumptions \hyperlink{H1}{(H1}\hyperlink{H2}{-3)}.


\begin{theorem}\label{dyn_bce}
Assuming that each class $k$ contains the same vector $x_k$ repeated  $|D_k|$ times, then the output of the classifier on $D_k$ verifies (with $p_k = |D_k| / |D|$ the fraction of $D$ belonging to $D_k$):
\begin{equation*}
    P_t(x_k \in D_k) \hspace{0.3cm} = \hspace{0.3cm} \sigma(u(\|x_k\|p_kt)),
\end{equation*}
where $u$ is defined below. The classification curves are sigmoidal and can be found on Fig.~\ref{plot} \textit{Right}.
\end{theorem}
\begin{proof}
To simplify the notations, we arbitrarily assume that $\mathcal{I}_1 = \{1\}$ and we write $w_t$ and $z_t$ the row and element modified by an update made using $x \in D_1$ (the case of $D_2$ can be treated symmetrically). By the independence above, we know that $w_t$ and $z_t$ are only affected by updates from $D_1$. This greatly simplifies our evolution equations to: $w'_t = \delta_f(x) z_t \hspace{0.1cm} x^T$ and $z'_t = \delta_f(x) \hspace{0.1cm} w_t x$ where $\delta_f(x)$ is the gradient of the loss with respect to the pre-sigmoid output of the network $u_t(x)$: $\delta_f(x) = 1_{\{k=1\}} - \sigma(u_t(x))$ and the prime indicates a time derivative. We let $y_t = w_t x$, which gives
\begin{equation}\label{eq-main:odes}
y_t' = \frac{x^T x \hspace{0.1cm} z_t}{1+e^{y_t z_t}}\,{\text{,}}  \hspace{2cm}
z_t' = \frac{y_t}{1+e^{y_t z_t}}   \,{\text{.}}
\end{equation}
Writing $\|x\|^2 = x^Tx$, we see that the quantity $y^2_t - \|x\|^2 z^2_t$ is an invariant of the problem, so its solutions live on hyperbolas of equation $y^2 - \|x\|^2 z^2 = \pm c$ with $c := |y^2_0 - \|x\|^2 z^2_0|$. 

We only treat the case of a degenerate hyperbola $c = 0$ \textit{i.e.} $y^2_0 = \|x\|^2 z^2_0$, and refer the interested reader to Appendix \ref{app-A3} for the full derivation. In the case $c = 0$, we have $\forall t,\  y_t = \|x\| z_t$ (those quantities are both positive as $x \in D_1$ and \hyperlink{H2}{(H2-3)}). $u(t) := z_t w_t x = z_t y_t$ thus follows the equation $u'(t) = 2 \|x\| u(t) \sigma(-u(t))$. One can see the equivalence between our evolution equation and Eq.~(10) in \citet{saxe14}. Its analytical solution is (see Appendix \ref{app-A3}):
\begin{equation}\label{eq-main:anal_sol}
    u(t) = (\log+Ei)^{<-1>}(2\|x\|t+\log(u_0)+Ei(u_0))\,{\text{,}}
\end{equation}
where $Ei$ is the exponential integral \citep{expi} and {\footnotesize <$-$1>} denotes the inverse function.

\begin{figure}
\centering
    \begin{subfigure}[t]{0.41\textwidth}
        \includegraphics[width=\textwidth]{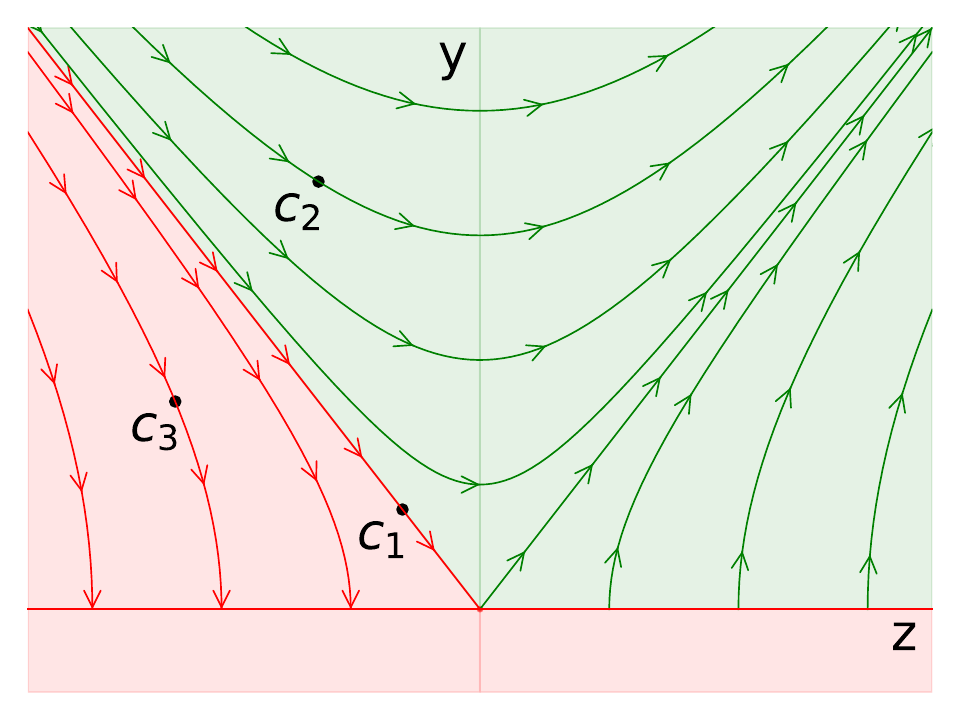}
    \end{subfigure} 
    \hspace{0.1cm}
    \begin{subfigure}[t]{0.46\textwidth}
        \includegraphics[width=\textwidth]{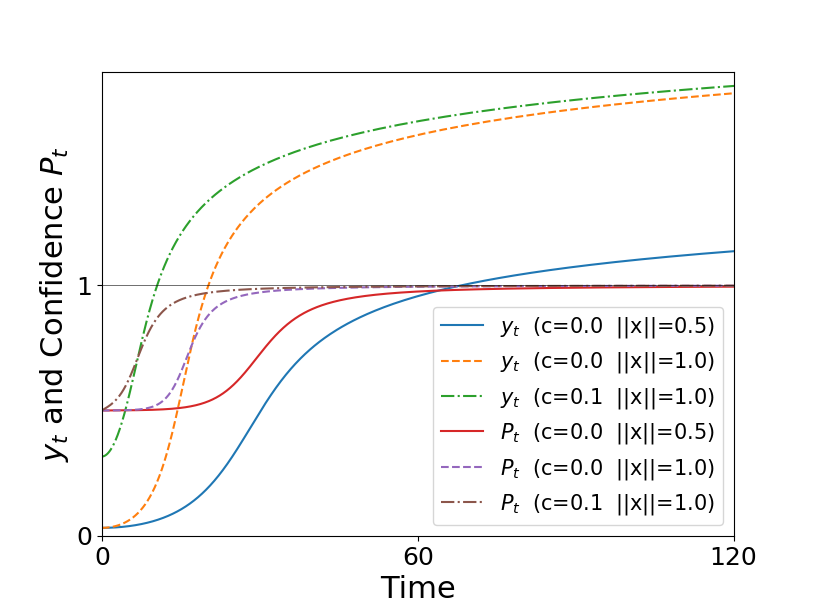}
    \end{subfigure} 
\caption{\textit{Left.} Phase diagram representing the dynamics of learning for the couple $(z_t, y_t)$ depending on its initialization. $y_t$ is the value for the class considered, in which all examples have lined up. Each couple lives on a hyperbola. The slope of the linear curves is equal $\pm\|x\|$ (set to $0.7$ in this diagram). The green region represents the initializations of $(y, z)$ where the classification task will be solved by the network. In the red region, learning does not start (the neuron is inactive at the beginning of training) or collapses as the neuron dies off when $y$ reaches $0$. The $c_i$ points show the three cases from Section \ref{phase}. \textit{Right.} $y_t$ and $P_t(x \in D_1) = \sigma(z_t y_t)$ for different values of $c$ and $\|x\|$.}\label{plot}
\end{figure}

We let $\bar{u}$ denote that function for $\|x\|=1$. For  $x \in D_2$, the degeneracy assumption becomes $y_0 = -\|x\| z_0$. It can be shown similarly that $v(t) := z_t y_t$  verifies the equation $v'(t) = 2 \|x\| v(t) \sigma(v(t))$ with a negative initial condition \hyperlink{H2}{(H2-3)}. In other words, $u$ and $v$ follow symmetric trajectories on the positive/negative real line. Below, $\bar{u}$ (resp. $\bar{v}$) denote those two trajectories for $\|x\|=1$ and initial conditions $u_0 > 0$ (resp. $v_0 < 0$). Let us now write $p_1 = |D_1| / |D|$ the fraction of points belonging to $D_1$. Because we sample randomly from the dataset, this amounts to sampling $p_1$ (resp. $1-p_1$) points from $D_1$ (resp. $D_2$) for each time unit during training, \textit{i.e.} to rescaling the time axis by $p_1$ for $D_1$ and $1-p_1$ for $D_2$. This allows us to quantify the network's performance at any time $t$:
\begin{equation}\label{eq-main:proba}
    \begin{aligned}
        \left\{
            \begin{array}{ll}
                &P_t(x \in D_1) \hspace{0.3cm} = \hspace{0.3cm} \sigma(\bar{u}(\|x\|p_1t)) \hspace{3.1cm} \text{ if } x \in D_1 \\
                &P_t(x \in D_2) \hspace{0.3cm} = \hspace{0.3cm} \sigma(-\bar{v}(\|x\|(1-p_1)t)) \hspace{2cm} \text{ if } x \in D_2
            \end{array}
        \right.
    \end{aligned}
\end{equation}
In particular, the convergence of $u(t)$ to $+\infty$ can be bounded using our results: convergence happens at a rate slower than $\log(t)$ (Appendix \ref{app-A3}), a fact proved on its own by \citet{separable3}.
\end{proof}
\textbf{Interpretation} Fig.~\ref{plot} \textit{Right.} shows the learning dynamics for different values of $\|x\|$ and $c$. One common characteristic between all the curves is their sigmoidal shape. Learning is slow at first, then accelerates before saturating. This is aligned with empirical results from the literature. We also see on \textit{e.g.} the blue and yellow curves that a larger $\|x\|$ (or similarly a larger $p$) converges much faster. The effect of $c$ on the dynamics can mostly been seen at the beginning of training (for instance on the green and yellow curves). It fades as convergence happens, corresponding to points of the hyperbolas getting closer to the asymptote $y = \|x\| z$, see Fig.~\ref{plot} \textit{Left.} and below for more details. We can characterize the convergence speeds more quantitatively with the following corollary.

\begin{corollary}\label{main-cor:times}
Let $\delta$ be the required accuracy on the classification task (\textit{i.e.} $P_t(x \in D_1) \geq 1-\delta$ for $x \in D_1$). Under certain assumptions, the times $t^*_1$ and $t^*_2$ required to reach that accuracy for each classifier verify $\frac{t^*_2}{t^*_1} \approx \frac{\|x_1\|}{\|x_2\|}\frac{p_1}{1-p_1}$ where $\|x_k\|$ is the norm of vector $\|x_k\|$ from class $k$. 
\end{corollary}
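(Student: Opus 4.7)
The plan is to invert the closed-form expressions for the classifier's output given in Theorem~\ref{dyn_bce} and extract the timescale of convergence. Start from the identities derived there, namely $P_t(x\in D_1)=\sigma(\bar u(\|x_1\|pt))$ for $x\in D_1$ and $P_t(x\in D_2)=\sigma(-\bar v(\|x_2\|(1-p)t))$ for $x\in D_2$, where $\bar u,\bar v$ are the reference trajectories obtained for $\|x\|=1$ and corresponding to the two classes. The accuracy requirement $P_{t^*_1}(x\in D_1)\geq 1-\delta$ rewrites, by monotonicity of $\sigma$, as
\begin{equation*}
\bar u\bigl(\|x_1\|p\,t^*_1\bigr) \;\geq\; \sigma^{-1}(1-\delta),
\end{equation*}
and symmetrically $-\bar v\bigl(\|x_2\|(1-p)\,t^*_2\bigr)\geq \sigma^{-1}(1-\delta)$ for class~$2$.

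Next I would use the symmetry observed in the proof of Theorem~\ref{dyn_bce}: under symmetric initial conditions across the two classes, $\bar u$ and $-\bar v$ follow mirror trajectories on the positive real line, so both conditions above amount to the \emph{same} scalar threshold being exceeded. Defining the universal target time $T^*(\delta):=\bar u^{-1}\bigl(\sigma^{-1}(1-\delta)\bigr)$, the two saturation conditions become $\|x_1\|p\,t^*_1 = T^*(\delta)$ and $\|x_2\|(1-p)\,t^*_2 = T^*(\delta)$. Dividing these two equalities immediately yields
\begin{equation*}
\frac{t^*_2}{t^*_1} \;=\; \frac{\|x_1\|\,p}{\|x_2\|\,(1-p)},
\end{equation*}
which is the claimed relation.

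The main obstacle is pinning down the ``certain assumptions'' that make the $\approx$ defensible. In the degenerate hyperbola case $c=0$ treated explicitly in Theorem~\ref{dyn_bce}, and provided the initial conditions across the two classes are symmetric (so that $\bar u$ and $-\bar v$ genuinely coincide), the statement holds with equality. Away from $c=0$, the full solution depends on the invariant $c=|y_0^2-\|x\|^2 z_0^2|$ and on the specific initialization; one must then argue that those corrections only shift the effective time by an $O(1)$ amount, so that as soon as $T^*(\delta)$ is large (i.e.\ $\delta$ small) the factor $T^*(\delta)$ cancels in the ratio up to subleading terms. Concretely I would either (i) restrict the corollary to the degenerate case with symmetric initialization across classes, or (ii) show that the full solution from Appendix~\ref{app-A3} writes $u(t)=(\log+Ei)^{<-1>}(2\|x\|t+C_0)$ where $C_0$ depends only on initial conditions and $c$, so that inverting at level $\sigma^{-1}(1-\delta)$ gives $t^*_k = \bigl(T^*(\delta)-C_0^{(k)}\bigr)/(\|x_k\|q_k)$ with $q_1=p$, $q_2=1-p$, and the $C_0^{(k)}/T^*(\delta)\to 0$ as $\delta\to 0$ justifies the $\approx$ symbol.
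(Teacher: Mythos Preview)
Your proposal is correct and follows essentially the same route as the paper: invert the time-rescaled dynamics from Theorem~\ref{dyn_bce} at the level $\sigma^{-1}(1-\delta)$ and take the ratio. The paper works in the degenerate case $c=0$ but, rather than assuming symmetric initial conditions $u_0=|v_0|$, allows $u_0\le|v_0|$ and tracks the resulting offset via $t_v:=\bar u^{-1}(|v_0|)$, obtaining $\frac{t^*_2}{t^*_1}=\frac{\|x_1\|}{\|x_2\|}\frac{p}{1-p}\bigl(1-\tfrac{t_v}{t^*}\bigr)$; the $\approx$ is then justified by $t_v/t^*\to 0$ when the initializations are close and $\delta$ is small---precisely your option~(ii) specialized to $c=0$.
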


The proof can be found in Appendix \ref{app-A5}. More frequent classes and larger inputs will be classified at a given level of confidence faster. The class frequency observation is fairly straightforward as updates on a more frequent class occur at a higher rate. As far as input sizes are considered, this can be seen as an analogous to the results from \citet{saxe14} stating that input-output correlations drive the speed of learning. Because a sigmoid is applied on the network output, its (pre-sigmoid) targets are sent to $\pm \infty$. A larger input is more correlated with its target and converges faster.

\textbf{On the assumptions} The assumption that each class only contains one vector allows us to obtain \textit{the first closed-form solutions of the learning dynamics for the binary cross-entropy}. It can be relaxed to classes containing orthogonal datapoints (see Appendix \ref{app-A8}) which still remains restrictive. A possible interpretation is the following: if one were to consider a deep neural network that has learnt two discriminative features for the two classes, applying classic SGD on those features would result in a learning rate proportional to the prominence of those two features in the original dataset, and to learning curves of that exact shape. It is worth noting that such shapes are regularly observed by ML practitioners \citep{saxe14}, our results reveal insights - otherwise unobtainable - into them.

\subsection{Phase diagram}\label{phase}

In this section, we build the phase diagram of Fig.~\ref{plot} \textit{Left}. The notations follow Theorem \ref{dyn_bce}, in particular $y_t = w_t x$. So far, we have considered points in the top-right quadrant. In that region, the couple $(z_t, y_t)$ lives on a hyperbola of equation $y^2 - \|x\|^2 z^2 = \pm c$ where $c \geq 0$. The sign in the equation is defined by the position of $(z_0, y_0)$ relative to the function $y = \|x\| z$ (positive if above, negative otherwise). If $(z_0, y_0)$ is originally on that line, it will remain there throughout training. We now explore the rest of the parameter space by relaxing some of our assumptions. We still consider a point $x \in D_1$, the diagram for $D_2$ can be obtained by mirroring the $z$ axis.

\textit{Assumption \hyperlink{H2}{(H2)}.} Let us first consider the simple case of $w_0 x = y_0 < 0$. The neuron is initially inactive because of the ReLU. No updates will ever be made to $w_t$ during training. This corresponds to the bottom half of the phase diagram, the parameters are frozen (also see \citet{Advani2017HighdimensionalDO}).

\textit{Assumption \hyperlink{H3}{(H3)}.} We now assume that $z_0 \leq 0$. In that case, a simple extension of Lemma \ref{modes_lemma} shows that learning still happens independently on each row of $w_t$. The outcome from Theorem \ref{dyn_bce} is still valid: the couple $(z_t, y_t)$ lives on a hyperbola. It is however not guaranteed anymore that $y_t$ shall remain positive throughout training. There are three possible situations (numbered 1 to 3), each represented by the corresponding point on the diagram.

\hspace{0.2cm} 1) \hspace{0.2cm} If $y_0 = - \|x\| z_0$, then the points $(z_t, y_t)$ are stuck in the top-left quadrant and converge to zero. The equation verified by the logit $u(t)$ is $u'(t) = - 2 \|x\| u(t) \sigma(-u(t))$ (see Appendix \ref{app-A7}).

\hspace{0.2cm} 2) \hspace{0.2cm} If $y_0 > - \|x\| z_0$, the points $(z_t, y_t)$ move on the hyperbola towards the top-right quadrant, at which point $z_t$ becomes positive and $y_t$ starts increasing again. 

\hspace{0.2cm} 3) \hspace{0.2cm} If $y_0 < - \|x\| z_0$, the points $(z_t, y_t)$ move on the hyperbola towards the bottom-left quadrant, at which point $y_t$ becomes negative. When that happens, the neuron dies out, learning stops.

Only in the second case will the classifier end up solving the task: random initialization only functions in certain parts of the $(y_t, z_t)$ space. Those findings are summarized in the phase diagram Fig.~\ref{plot} \textit{Left}. The red region represents the initialization where the network will not be able to solve the task. 

\textbf{Failure modes} Assumption \hyperlink{H2}{(H2)} essentially means that the network's first layer is able to separate the data at $t = 0$. It is remarkable that even under \hyperlink{H2}{(H2)} the model fails on a non-zero measure set of the initialization space (top-left red region of Fig.~\ref{plot} \textit{Left}): in that region, it converges to a classifier assigning a probability of $0.5$ to the true class (dash-dotted curves in Fig.~\ref{multi_plot} \textit{Right}).

\subsection{Relaxing Assumption (H2)}\label{relaxing}

Assumption \hyperlink{H2}{(H2)} states that for any $i \in \mathcal{I}_k$, $x \in D_k$ and $x' \notin D_k$, $w^i_0 x > 0$ and $w^i_0 x' \leq 0$. We now relax it by assuming that a point $x_2$ in $D_2$ verifies $w^1_0 x_2 > 0$ and we study the evolution of $w^1_t$. We consider updates coming from sampling equally $x_1$ from $D_1$ and $x_2$ from $D_2$. To simplify the analysis, we assume that $x_1^T x_2 = 0$ and write $\alpha_t = w^1_t x_1$ (equivalent to $y_t$ above) and $\beta_t = w^1_t x_2$. The triplet $(\alpha_t, \beta_t, z_t)$ satisfies the following system of ODEs (see Appendix \ref{app-A6} for the derivation):
\begin{equation}\label{ode_system}
\alpha_t' = \frac{\|x_1\|^2 z_t}{1+e^{z_t \alpha_t}}\,{\text{,}} \hspace{1cm} \beta_t' =  -\frac{\|x_2\|^2 z_t}{1+e^{-z_t \beta_t}}\,{\text{,}} \hspace{1cm}
z_t' = \frac{\alpha_t}{1+e^{z_t \alpha_t}} - \frac{\beta_t}{1+e^{-z_t \beta_t}}\,{\text{.}}
\end{equation}
From our relaxation of (H2), we have $\alpha_0, \beta_0 > 0$. Since the system is symmetric under the transformation $(\alpha_t, \beta_t, z_t) \rightarrow (\beta_t, \alpha_t, -z_t)$, we can assume $z_0 \geq 0$. Due to the ReLU activations of the network, whenever $\alpha_t$ or $\beta_t$ reaches zero, it becomes constant and its contribution to $z_t$ disappears: the model reaches the independent modes of learning regime from Section~\ref{modes} and evolves according to the results above (\textit{e.g.} green hyperbolas in the plane $\beta = 0$ on Fig.~\ref{multi_plot} \textit{Left}). If $\beta_t$ (resp. $\alpha_t$) reaches $0$ at some point, $D_1$ (resp. $D_2$) becomes the only class activating the neuron. Let us now characterize how the initialization of the network influences the outcome of learning.


\begin{figure}
\centering
    \begin{subfigure}{0.48\textwidth}
        \includegraphics[width=\textwidth]{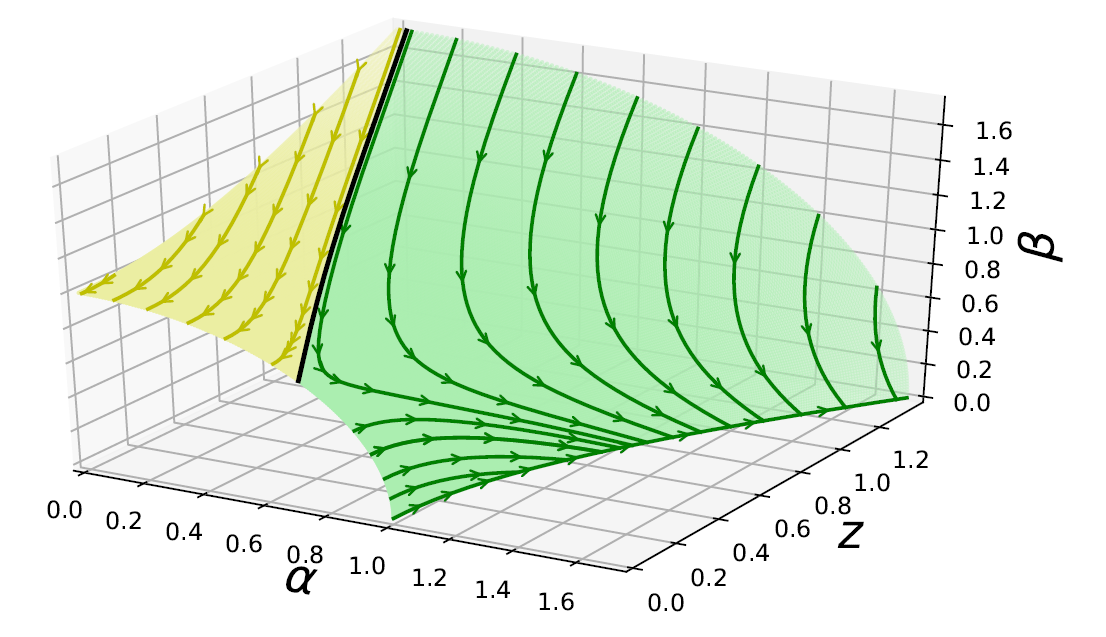}
    \end{subfigure}
    \begin{subfigure}{0.48\textwidth}
        \includegraphics[width=\textwidth]{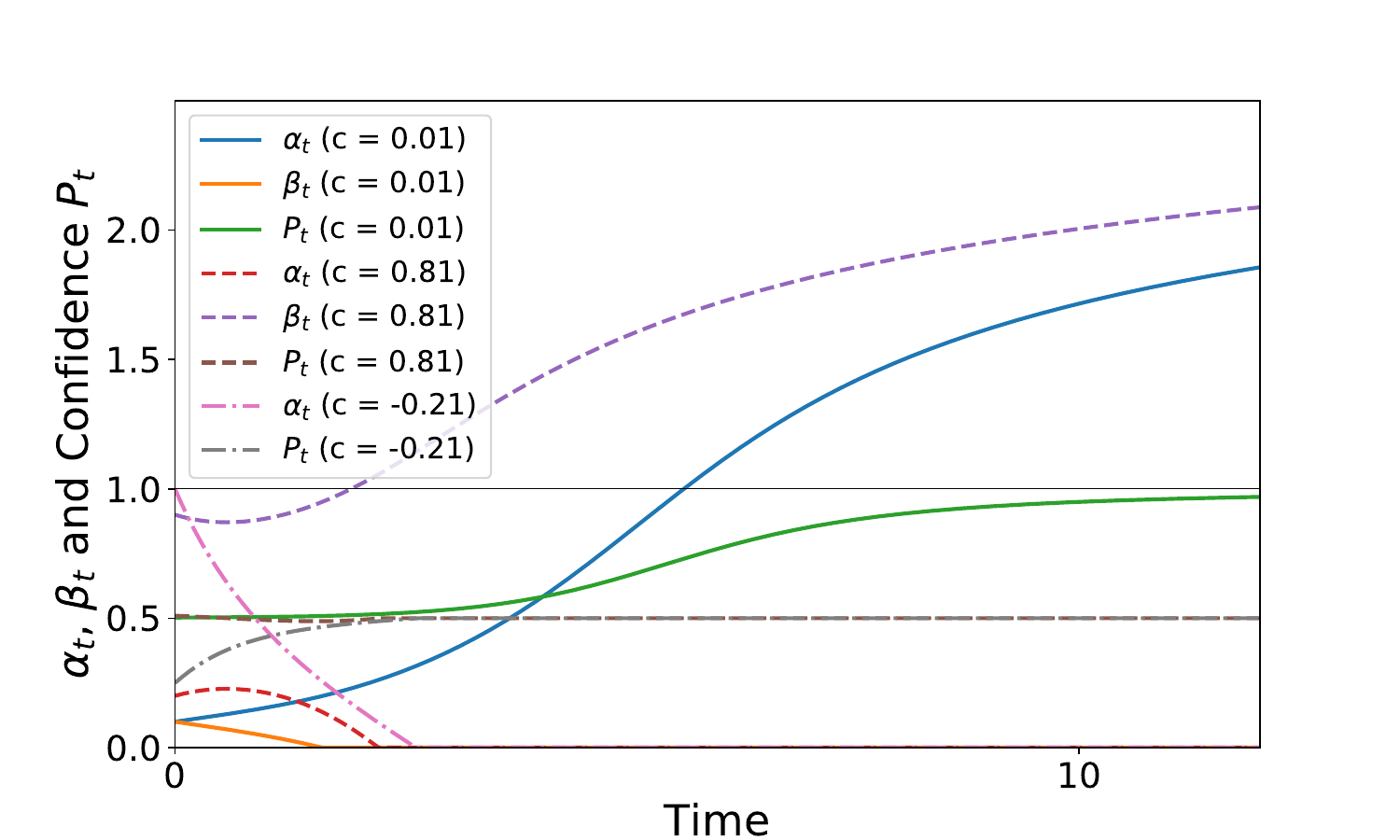}
    \end{subfigure}
\caption{\textit{Left.} Solutions of (\ref{ode_system}) for different initializations and $c = 1$. \textit{Right.} Values of $\alpha_t$, $\beta_t$ and $P_t$ the confidence of the classifier on an example from class $D_1$ for three different initializations. The ``full'' curves correspond to $(\alpha_0, \beta_0, z_0) = (0.1, 0.1, 0.1)$ \textit{i.e.} a trajectory in the green region where $\beta_t$ reaches 0 (orange curve). The confidence on class $D_1$ tends to 1 (green curve). The ``dashed'' curves correspond to $(\alpha_0, \beta_0, z_0) = (0.2, 0.9, 0.2)$ \textit{i.e.} a trajectory in the yellow region, corresponding to $\alpha_t$ reaching $0$ (red curve). The confidence on $D_1$ goes to $0.5$ in that case (brown curve), and the confidence on class $D_2$ goes to 1 (not shown). The ``dash-dotted" curves correspond to $(\alpha_0, \beta_0, z_0) = (1, 0, -1.1)$ and are an instance of the aforementioned failure mode: $\alpha_t$ (or equivalently $y_t$) tends to $0$ (pink curve), $\beta_t$ (not shown) remains $0$ and $P_t$ tends to $0.5$ (grey curve).}\label{multi_plot}
\end{figure}

\begin{theorem}
Letting $c := \alpha_0^2 /\|x_1\|^2  + \beta_0^2 / \|x_2\|^2 - z_0^2$, the solutions of (\ref{ode_system}) verify for all $t \geq 0$, $\alpha_t^2 / \|x_1\|^2  + \beta_t^2 / \|x_2\|^2  - z_t^2 = c$. In other terms, they live on hyperboloids (see Fig.~\ref{multi_plot} \textit{Left}).

If $c \leq 0$, $\beta_t$ reaches $0$ at some point during training (Fig.~\ref{hyperboloid_two_sheets} of the Appendix).
If $c > 0$, there exists a curve $\mathcal{C}_c$ (shown in black on Fig.~\ref{multi_plot} \textit{Left}) such that as $t \rightarrow +\infty$, for any initialization $(\alpha_0, \beta_0, z_0) \in \mathcal{C}_c$: 
\hspace{-0.2cm}
\begin{equation*}
\alpha_t \rightarrow (\frac{1}{\|x_1\|^2} + \frac{1}{\|x_2\|^2})^{-1/2}  \sqrt{c}\,{\text{,}} \hspace{1cm} \beta_t \rightarrow (\frac{1}{\|x_1\|^2} + \frac{1}{\|x_2\|^2})^{-1/2}  \sqrt{c}\,{\text{,}} \hspace{1cm}
z_t \rightarrow 0\,{\text{.}}
\end{equation*}
That curve defines two regions of the initialization space. In one, colored yellow on Fig.~\ref{multi_plot} \textit{Left}, the trajectories verify $\alpha_t = 0$ for some $t$. In the other, colored green, $\beta_t$ reaches $0$ at some point.
\end{theorem}

\textbf{Interpretation} The proof of the theorem can be found in Appendix \ref{app-A6}. Fig.~\ref{multi_plot} \textit{Left} shows some solutions of (\ref{ode_system}). Concretely, we see from the equations that the sign of $z_t$ determines whether $\alpha_t$ and $\beta_t$ increase or decrease, and how fast they do so. $z_t$'s evolution on the other hand is the result of a competition between $\alpha_t$ and $\beta_t$. If $z_0$ and/or $\alpha_0$ are sufficiently large, $\beta_t$ will decrease fast and long enough to reach $0$ at some point (green curves). Conversely, for a large $\beta_0$, $z_t$ can reach $0$ before $\beta_t$. When that is the case, $\alpha_t$ then decreases until it reaches $0$ (yellow curves). We plot examples of those behaviors in Fig.~\ref{multi_plot} \textit{Right}. We notice in particular the classic sigmoidal shape appearing, even when Assumption (H2) is violated. This can be explained as follows. In the regime of small initializations (customary in deep learning), the competition between $\alpha_t$, $\beta_t$ and $z_t$ happens in a part of parameter space where all the weights are small (\textit{i.e.} where the confidence of the network is close to $0.5$). When one class finally prevails over the other, \textit{e.g.} $\beta_t$ reaching $0$ (orange curve in the plot), the analytical solutions from previous sections apply and the sigmoidal shape arises.




\section{Deeper neural networks}\label{multilayer}

\begin{figure}
  \begin{center}
  \vspace{-0.5cm}
    \includegraphics[width=0.45\textwidth]{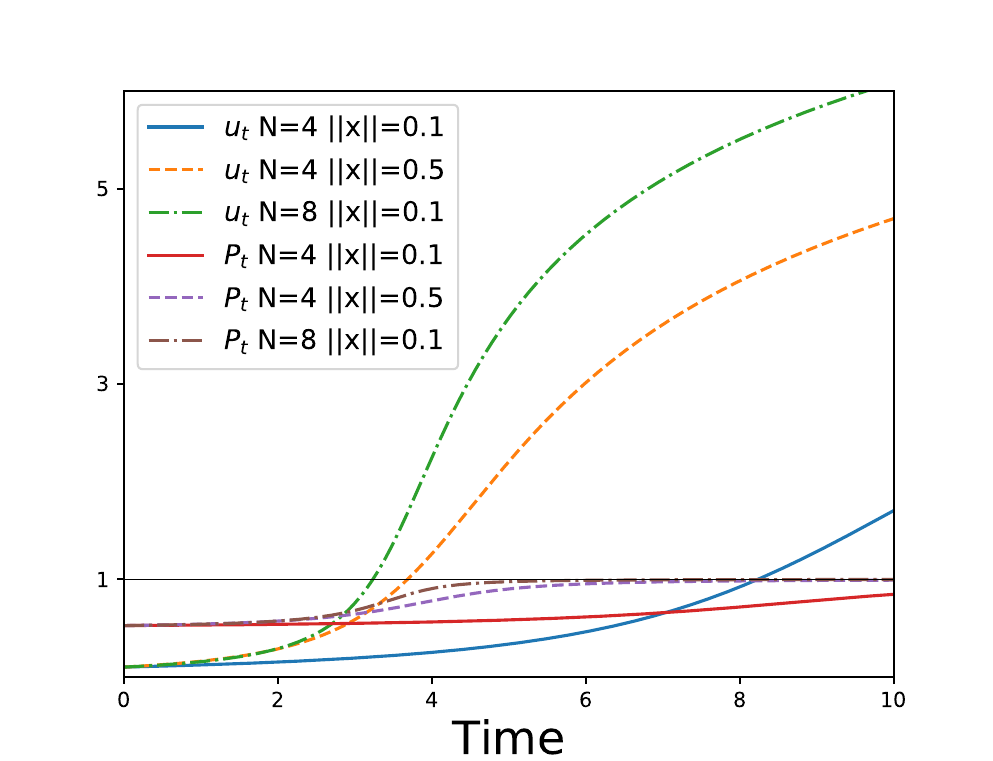}
  \end{center}
  \vspace{-0.2cm}
  \caption{Logit $u(t)$ and confidence $P_t$ for different number of layers and values of $\|x\|$.}\label{multi_layer_plot}
  \vspace{-0.5cm}
\end{figure}

In a similar fashion to the above, we can extend our analysis to a network with $N-1$ hidden layers of the same shape. Under some stringent assumptions on the initialization of the network, the classes are learnt independently from one another. We can also show that the ordinary differential equation verified by the logit $u(t)$ of our network is
\begin{equation}\label{eq-main:multi}
    u'(t) = \frac{N \|x\|^{2/N} u^{2-2/N}(t)}{1+e^{u(t)}}\,{\text{.}}
\end{equation}
The full analysis can be found in Appendix \hyperlink{app-B}{B}. Solutions of Eq.~\ref{eq-main:multi} are plotted in Fig.~\ref{multi_layer_plot} \textit{Right}. Here too, a sigmoidal shape appears during the learning process. The effect of $\|x\|$ reduces as $N$ grows due to the power $2/N$, however, larger values still converge faster (\textit{e.g.} the blue and yellow curves). Additionally, as noted in \citet{saxe14} for linear networks: \textit{the deeper the network, the faster the learning}. This fact is studied in more details in \citet{DBLP:journals/corr/abs-1802-06509} where depth is shown to accelerate convergence in some cases.

\section{On the hinge loss}\label{hinge}

Recent results in the field of generative adversarial networks have resurrected the hinge loss \citep{spectralnorm}. While its exact impact on performance is unclear, we run a small experiment to show its ability to generate better samples than the customary cross-entropy (see Fig.~\ref{fig-main:plot_hinge} and Appendix \hyperlink{app-E}{E}). In order to perhaps uncover reasons behind its efficiency, we extend our results to the hinge loss:
\begin{equation*}
    L_{H}(W_t, Z_t; x) = \max(0, 1 - Z_t^T (W_tx)_+ \cdot (\mathbbm{1}_{x \in D_1} - \mathbbm{1}_{x \in D_2}))\,{\text{.}}
\end{equation*}
The hinge loss is non differentiable, but one can simply consider that learning stops as soon as the output of the network reaches $1$ (resp. -$1$) for class $D_1$ (resp. $D_2$). Under the same assumptions than in Theorem \ref{dyn_bce} (some of which can be relaxed, see Appendix \hyperlink{app-C}{C}), we have the following result:

\begin{theorem}\label{dyn_hinge}
For $x \in D_1$ (for $D_2$ it is simply the opposite), the output $u(t)$ of the network verifies
\begin{equation}\label{eq-main:res_hinge}
u(t) = \min(1, \hspace{2pt} u_0 \hspace{2pt} e^{2 p \|x\| t})\,{\text{,}} \hspace{1.5cm} u(t) = \min(1, \hspace{2pt} \frac{c}{2 \|x\|} \sinh(\theta_0 + 2 p \|x\| t))\,{\text{,}}
\end{equation}
where $\theta_0 = \cosh^{-1}(\frac{y_0^2 + \|x\|^2 z_0^2}{c})$ and the left and right equations correspond to $c=0$ and $c\neq0$.
\end{theorem}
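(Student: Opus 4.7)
The plan is to mirror the derivation of Theorem \ref{dyn_bce}, exploiting the fact that the hinge loss trivializes the factor $\delta_f(x)$ appearing in the flow. The independent-mode structure from Lemma \ref{modes_lemma} carries over essentially unchanged (the proof only uses the sign of $\delta_f$, which stays constant), so one may again restrict attention, for $x \in D_1$, to the scalars $y_t = w_t x$ and $z_t$ (the unique active component prescribed by $\mathcal{I}_1=\{1\}$). Because $L_H$ is piecewise linear, the sub-gradient with respect to the pre-output $u_t = z_t y_t$ is exactly $-1$ as long as $u_t < 1$ and $0$ afterwards. Writing down the continuous-time updates with this $\delta_f \equiv 1$ yields the linear system
\begin{equation*}
y_t' \;=\; \|x\|^2\, z_t, \qquad z_t' \;=\; y_t,
\end{equation*}
valid on the active region $\{u_t < 1\}$. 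In particular, the same invariant $y_t^2 - \|x\|^2 z_t^2 = \pm c$ as in Theorem \ref{dyn_bce} is preserved, so the discussion of the two cases $c=0$ and $c\neq 0$ carries through.

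Solving the linear system explicitly, I would diagonalise the $2\times 2$ matrix (eigenvalues $\pm\|x\|$, eigenvectors $(\pm\|x\|,1)^T$) to write
\begin{equation*}
y_t \;=\; \|x\|\bigl(A e^{\|x\|t} - B e^{-\|x\|t}\bigr), \qquad z_t \;=\; A e^{\|x\|t} + B e^{-\|x\|t},
\end{equation*}
with $A = \tfrac12(y_0/\|x\| + z_0)$ and $B = \tfrac12(z_0 - y_0/\|x\|)$. Multiplying out,
\begin{equation*}
u_t \;=\; y_t z_t \;=\; \|x\|\bigl(A^2 e^{2\|x\|t} - B^2 e^{-2\|x\|t}\bigr).
\end{equation*}
In the degenerate case $c=0$, assumption \hyperlink{H2}{(H2\hyperlink{H3}{-3})} forces $y_0 = \|x\| z_0 > 0$, so $B=0$ and $u_t = \|x\| z_0^2 e^{2\|x\|t} = u_0 e^{2\|x\|t}$. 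In the generic case $c \neq 0$, one factorises $A^2 e^{2\|x\|t} - B^2 e^{-2\|x\|t} = 2|AB| \sinh(\phi + 2\|x\|t)$ with $\phi = \tfrac12 \log(A^2/B^2)$; direct computation gives $|AB| = c/(4\|x\|^2)$ and $\cosh(\phi) = (A^2+B^2)/(2|AB|) = (y_0^2 + \|x\|^2 z_0^2)/c$, yielding $u_t = \frac{c}{2\|x\|} \sinh(\theta_0 + 2\|x\|t)$ with the prescribed $\theta_0$.

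Two remaining steps finish the proof. First, as in Theorem \ref{dyn_bce}, stochastic sampling from $D$ rescales time by a factor $p = |D_1|/|D|$ for points in $D_1$ (and $1-p$ for $D_2$), changing $2\|x\|t$ into $2p\|x\|t$. Second, since the hinge gradient vanishes for $u_t \geq 1$, the flow freezes upon hitting the margin; this is exactly encoded by the outer $\min(1,\cdot)$. The symmetric case $x\in D_2$ follows by flipping the sign of $\delta_f$, giving a mirror trajectory on the negative real line. I expect the only mildly delicate step to be the algebraic identification of $\theta_0$ with $\cosh^{-1}\bigl((y_0^2 + \|x\|^2 z_0^2)/c\bigr)$ and checking that $\phi$ has the correct sign so that $u_t$ grows towards $1$ rather than away from it; this is taken care of by the initial conditions $y_0, z_0 > 0$ implied by \hyperlink{H2}{(H2\hyperlink{H3}{-3})}, which force $A > |B|$ and hence $\phi \geq 0$.
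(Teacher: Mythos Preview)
Your proof is correct and reaches the same formulas as the paper. The paper's argument is essentially the same at the level of the ODE system $y_t' = \|x\|^2 z_t$, $z_t' = y_t$, the invariant $y_t^2 - \|x\|^2 z_t^2$, the time rescaling by $p$, and the $\min(1,\cdot)$ cutoff. The only methodological difference is in how the $c\neq 0$ case is solved: the paper parametrises the invariant hyperbola directly via $y_t = \sqrt{c}\cosh(\theta/2)$, $z_t = \frac{\sqrt{c}}{\|x\|}\sinh(\theta/2)$ (or with $\cosh$ and $\sinh$ swapped), derives $\theta' = 2\|x\|$, and reads off $u_t = \frac{c}{2\|x\|}\sinh(\theta_0 + 2\|x\|t)$; you instead diagonalise the linear system explicitly in the exponential basis and then repackage $A^2 e^{2\|x\|t} - B^2 e^{-2\|x\|t}$ as a shifted $\sinh$. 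These are equivalent routes to the same closed form. Your approach is perhaps a shade more direct here because the hinge loss makes the system genuinely linear (so eigen-decomposition is the natural move), whereas the hyperbolic parametrisation is inherited from the BCE analysis where the nonlinearity makes a geometric coordinate on the invariant curve more convenient.
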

\begin{proof}
Simple computations show that the dynamics of the system are governed by $y_t' = x^T x \hspace{0.1cm} z_t$ and $z_t' = y_t$. Following the method from Section \ref{binary}, we see that $u'(t) = 2 \|x\| u(t)$ in the case where $c=0$, leading to to the result. When $c \neq 0$, a classic hyperbolic change of variables allows to find the solution. Its full derivation is presented in Appendix \hyperlink{app-C}{C}.
\end{proof}

The learning curves are plotted in Fig.~\ref{fig-main:plot_hinge} \textit{Right}. We notice a \textit{hard} sigmoidal shape corresponding to learning stopping when $u_t$ reaches $1$. Confidence increases exponentially in $t$, much faster than for binary cross-entropy (all other parameters kept equal) where $u(t) \sim \log(t)$. With $\delta$ the required confidence for our classifier, the time $t^*$ required to reach $\delta$ can easily be computed. We plot it in Fig.~\ref{fig-main:plot_hinge} \textit{Right} which confirms visually that the hinge loss converges much faster. We also notice the expected divergence of $t^*$ for the binary cross entropy as $\delta$ reaches $1$ (training never converges in that case). We refer the interested reader to Appendix \hyperlink{app-C}{C} for a more general treatment of the Hinge loss, which fully relaxes the assumption on the number of points in the classes.

\begin{figure}[h]
\centering
    \begin{subfigure}{0.65\textwidth}
        \includegraphics[width=\textwidth]{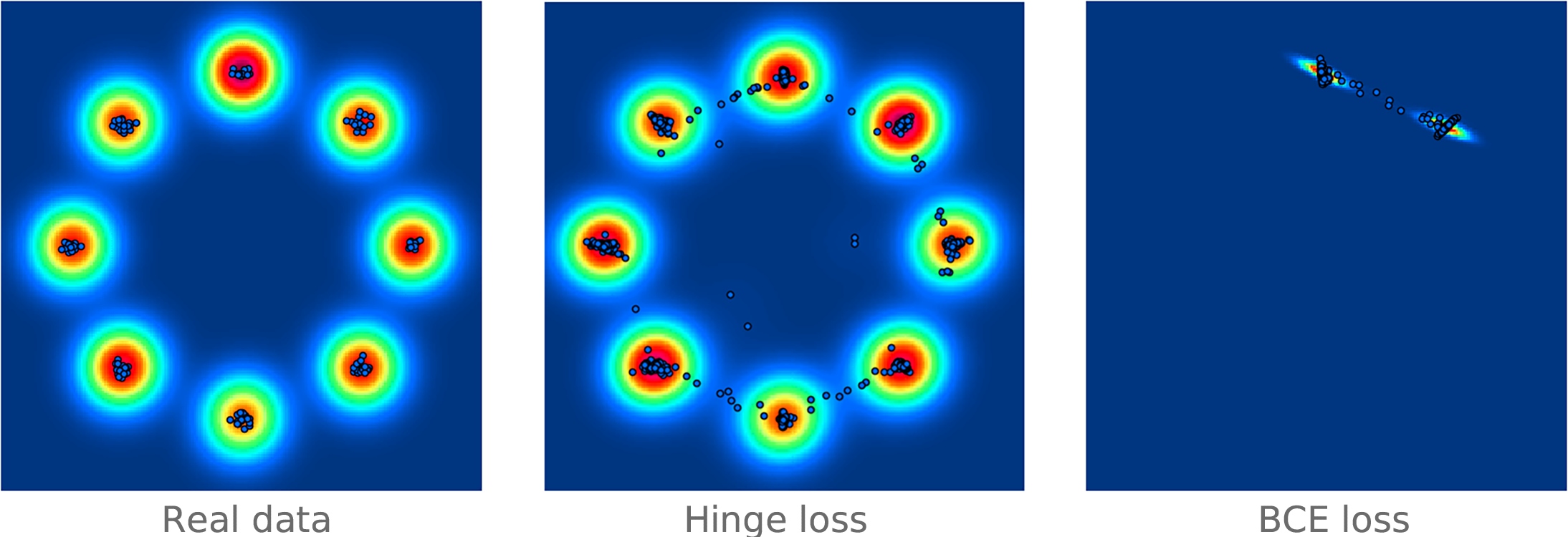}
    \end{subfigure}
    \begin{subfigure}{0.34\textwidth}
        \vspace{-0.45cm}
        \includegraphics[width=\textwidth]{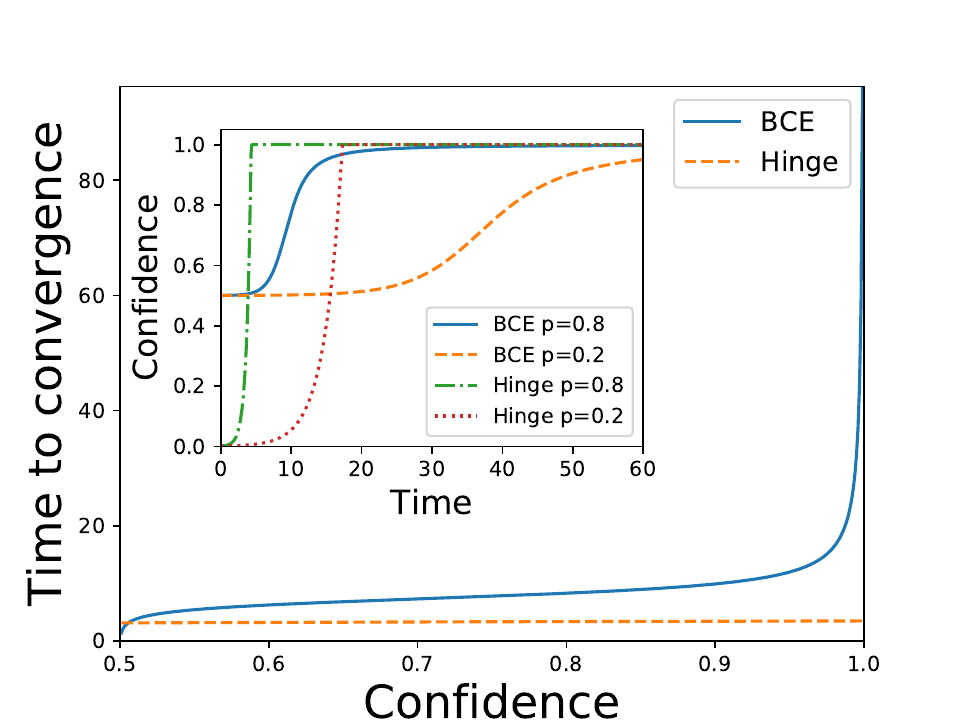}
    \end{subfigure}
\caption{\textit{Left.} The three figures on the left are the result of training a generative adversarial network on 8 Gaussians (see Appendix \protect\hyperlink{app-E}{E} for details on the experiment). The samples from the hinge loss are incomparably better. \textit{Right.} Comparison between hinge loss and binary cross-entropy: training time required to reach a confidence $\delta$ on the classification problem. Subplot: Solutions of Eqs.~\ref{eq-main:proba} and \ref{eq-main:res_hinge}.}\label{fig-main:plot_hinge}
\end{figure}

\section{Gradient Starvation}\label{starvation}

In this section, we attempt to quantify the impact of feature frequency inside a given class. We keep our simplified framework and consider that the input $x$ to our network is composed of two underlying features $x_1 \in \mathbb{R}^{d_1}$ and $x_2 \in \mathbb{R}^{d_2}$ with $d = d_1 + d_2$. We let $(x_1, x_2) \in \mathbb{R}^{d}$ denote the concatenation of the vectors $x_1$ and $x_2$. We assume that all the points in class $D_1$ contain the feature $x_1$ but only a fraction $\lambda$ of them contains the feature $x_2$. This is equivalent to making continuous gradient updates using the vector $(x_1,  x_2)$ with a rate $\lambda$ and the vector $(x_1, 0)$ with a rate $1-\lambda$. We also assume that those features are fully informative for $D_1$ -- \textit{i.e.} are absent from class $D_2$. A network trained using gradient descent on the dataset we just described has the following property:
\begin{center}
    \textit{Even though the feature represented by $x_2$ is fully informative of the class, the network will not classify a sample containing only $x_2$ with high confidence.}
\end{center}
It is the result of a phenomenon we coin \textit{gradient starvation} where the most frequent features starve the gradient for the least frequent ones, resulting in a slower learning of those:
\begin{theorem}\label{thm-main:gs}
Let $\delta$ be our confidence requirement on class $D_1$ \textit{i.e.} training stops as soon as $\forall x \in D_1,\,  P_{t}(x \in D_1) \geq 1 - \delta$, and let $t^*$ denote that instant. Then, under some mild assumptions,
\begin{equation}
    P_{t^*}((0, x_2) \in D_1) \leq \frac{1}{1+e^{-\lambda \log(\frac{1-\delta}{\delta})}}\,\,{\text{.}}
\end{equation}
\end{theorem}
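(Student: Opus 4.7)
I would exploit the independent-mode structure from Lemma~\ref{modes_lemma} to reduce the problem to a scalar comparison of two ``sub-logits''. Since all points of $D_1$ share a common sign pattern against $D_2$, the lemma says their active hidden neurons are exactly those indexed by $\mathcal{I}_1$. Restricting attention to one such row $w_t \in \mathbb{R}^d$ (the general case decouples across active neurons and is identical), I split $w_t = (w_t^1, w_t^2)$ along the two feature blocks and set $a_t := w_t^1 x_1$, $b_t := w_t^2 x_2$. Under the ``mild'' assumptions $\|x_1\| = \|x_2\|$, $z_0 > 0$, and $0 \leq b_0 \leq \lambda a_0$ (automatic for any small symmetric initialization), Lemma~\ref{modes_lemma} gives $z_t, a_t > 0$ for all $t$, while the explicit sign of $b_t'$ computed below yields $b_t \geq b_0 \geq 0$.

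The next step is to write the continuous-time gradient dynamics induced by drawing $(x_1, x_2)$ with rate $\lambda$ and $(x_1, 0)$ with rate $1 - \lambda$. Denote $u^f_t := z_t(a_t + b_t)$, $u^p_t := z_t a_t$, and $u^{(x_2)}_t := z_t b_t$. Following the same derivation as for Theorem~\ref{dyn_bce}:
\begin{align*}
a_t' &= \|x_1\|^2 z_t \bigl[\lambda \sigma(-u^f_t) + (1-\lambda)\sigma(-u^p_t)\bigr], \\
b_t' &= \|x_2\|^2 \lambda z_t \sigma(-u^f_t).
\end{align*}

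The heart of the argument is a simple ratio estimate. Since $b_t \geq 0$ implies $u^f_t \geq u^p_t$ and $\sigma$ is monotone, $\sigma(-u^f_t) \leq \sigma(-u^p_t)$; combined with $\|x_1\| = \|x_2\|$ this gives
\[
\frac{b_t'}{a_t'} \;=\; \frac{\lambda \sigma(-u^f_t)}{\lambda \sigma(-u^f_t) + (1-\lambda)\sigma(-u^p_t)} \;\leq\; \lambda.
\]
Integrating along the trajectory, $b_t - b_0 \leq \lambda(a_t - a_0)$, so the hypothesis $b_0 \leq \lambda a_0$ yields $b_t \leq \lambda a_t$ and hence $u^{(x_2)}_t \leq \lambda u^p_t$ for every $t \geq 0$.

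Finally, because $u^f_t \geq u^p_t$, the bottleneck in the stopping condition $\forall x \in D_1,\ P_t(x \in D_1) \geq 1 - \delta$ is the partial point $(x_1, 0)$, so $t^*$ is characterized by $u^p_{t^*} = \log((1 - \delta)/\delta)$. Plugging into the inequality above gives $u^{(x_2)}_{t^*} \leq \lambda \log((1-\delta)/\delta)$, and applying $\sigma$ yields exactly the stated bound. The step I expect to be most delicate is the initial-condition inequality $b_0 \leq \lambda a_0$: for a fixed initialization it may fail by a constant residue, so the cleanest version of the theorem either takes an infinitesimal-initialization limit or absorbs the residue into the ``mild assumptions''. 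Relaxing $\|x_1\| = \|x_2\|$ is routine and simply dresses the exponent by a factor $\|x_2\|^2/\|x_1\|^2$.
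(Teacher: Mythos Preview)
Your proposal is correct and follows essentially the same argument as the paper: decompose the active row along the two feature directions, derive the coupled ODEs for $a_t,b_t$ (the paper's $\alpha_t,\beta_t$), use $b_t\geq 0$ and monotonicity of $\sigma$ to get $a_t'\geq b_t'/\lambda$ (equivalently your ratio bound), integrate with the initial condition $b_0\leq\lambda a_0$, and evaluate at the stopping time determined by the slower point $(x_1,0)$. The paper likewise imposes $\alpha_0\geq\beta_0/\lambda$ as one of the ``mild assumptions'' and implicitly takes unit-norm features, so your caveats about $\|x_1\|=\|x_2\|$ and the initial-condition inequality match the paper's treatment (the latter is relaxed in Appendix~D exactly along the lines you anticipate).
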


\begin{proof}
From Lemma \ref{modes_lemma}, assumptions \hyperlink{H2}{(H2-3)} are sufficient to guarantee independent mode learning as well as positiveness of $z_t$. We decompose $w_t = (\alpha_t x_1, \beta_t x_2) + (x_1^{\perp}, x_2^{\perp})$ where $x_1^T x_1^{\perp} = x_2^T x_2^{\perp} = 0$, and assume that $\alpha_0 \geq \beta_0 / \lambda > 0$ (in App. \hyperlink{app-D}{D}, we relax some of those assumptions and prove an equivalent result). The evolution equation for $w_t$ is $w_t' = \frac{z_t x}{1+e^{z_t w_t x}}$ with $x = (x_1, x_2)$ (resp. $(x_1, 0)$) at an $\lambda$ (resp. $1-\lambda$) rate. Projecting on $x_1$ and $x_2$ gives
\begin{equation}\label{gs_eq}
\hspace{-1cm} \alpha_t' = \lambda \frac{z_t}{1+e^{z_t (\alpha_t + \beta_t)}} + (1 - \lambda) \frac{z_t}{1+e^{z_t \alpha_t}}\,{\text{,}} \hspace{1.5cm} \beta_t' = \lambda \frac{z_t}{1+e^{z_t (\alpha_t + \beta_t)}}\,{\text{.}}
\end{equation}
From $z_t > 0$, we see that $\beta_t$ is an increasing function of time, which guarantees $\beta_t > 0$, and 
\begin{equation*}
\alpha_t' \geq \beta_t'+ (1 - \lambda) \frac{z_t}{1+e^{z_t (\alpha_t + \beta_t)}} = (1 + \frac{1-\lambda}{\lambda}) \beta_t' = \frac{\beta_t'}{\lambda}\,{\text{.}}
\end{equation*}
This proves that $\forall  t \geq 0,\  \alpha_t \geq \beta_t / \lambda$. We now consider $t^*$ such that $z_{t^*} \alpha_{t^*} = \log(\frac{1-\delta}{\delta})$. It is the smallest $t$ such that $P_t((x_1,  x_2) \in D_1) \geq P_t((x_1, 0) \in D_1) = 1-\delta$, its existence is guaranteed by $z_t$ and $\alpha_t$ being increasing (we also assume that $t=0$ does not verify those (in)equalities). We get
\begin{equation*}
    P_{t^*}((0, x_2) \in D_1) = \frac{1}{1+e^{-z_{t^*} \beta_{t^*}}} \leq \frac{1}{1+e^{-\lambda z_{t^*} \alpha_{t^*}}} = \frac{1}{1+e^{-\lambda \log(\frac{1-\delta}{\delta})}}\,{\text{.}}
\end{equation*}
As can be seen in Eq.~\ref{gs_eq}, the presence of $\alpha_t$ -- which detects feature $x_1$ -- in the denominator of $\beta_t'$ greatly reduces its value, thus preventing the network from learning $x_2$ properly. 
\end{proof}

In Fig.~\ref{fig-main:gradient_starvation} \textit{Left.}, we plot for different values of $\lambda$ the confidence of the network when classifying $x_2$ as a function of its confidence on $x_1$ (see App. \hyperlink{app-D}{D}. for more details). The gap between the two is very significant: with \textit{e.g.} $\lambda = 0.1$ and $\delta = 10^{-4}$ (Table \ref{gs_table_theo}), $P_{t^*}((0, x_2) \in D_1) \leq 72\%$! Even though $x_2$ is exclusively present in $D_1$, and is thus extremely informative, the network is unable to classify it.

\textbf{Experiment} To validate those findings empirically, we design an artificial experiment based on the cats and dogs dataset \citep{catsdogs}. We create a very strong, perfectly discriminative feature by making the dog pictures brighter, and the cat pictures darker. We then train a standard deep neural network to classify the modified images and measure its performance on the untouched testing set.

\begin{minipage}{0.55\textwidth}
    \centering
    \captionsetup{type=figure}
    \includegraphics[width=0.9\textwidth]{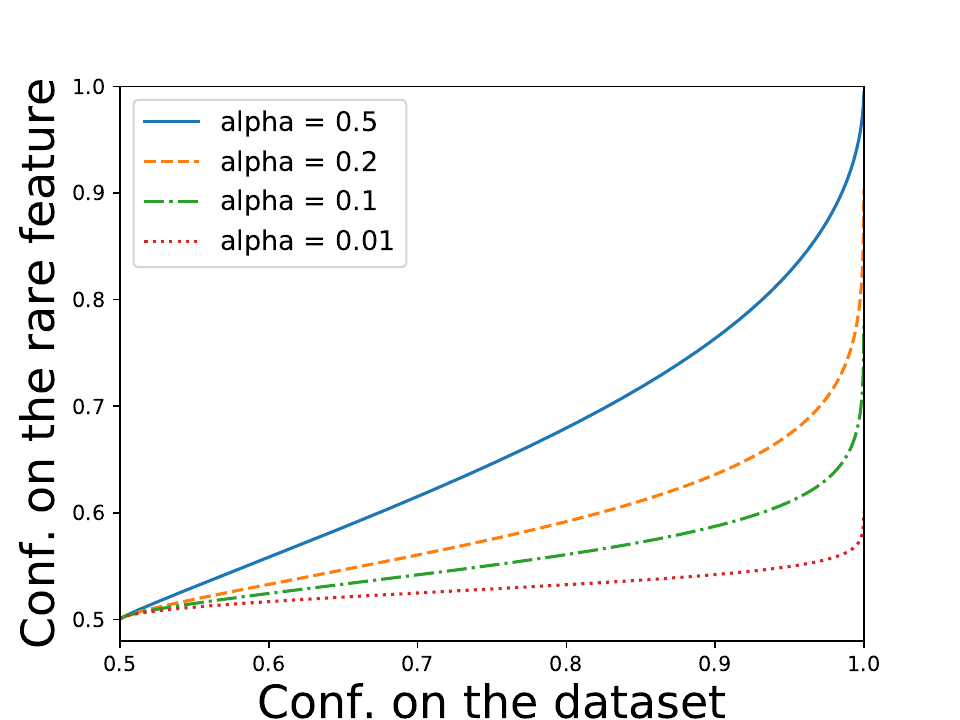}
    \captionof{figure}{Upper bound on $P_{t^*}((0, x_2) \in D_1)$ as a function of $1-\delta$ for different values of $\lambda$.}
    \label{fig-main:gradient_starvation}
\end{minipage}
\hfill
\begin{minipage}{0.42\textwidth}
  \vspace{-0.5cm}
  \captionsetup{type=table}
  \captionof{table}{Gradient starvation}
  \vspace{0.1cm}
  \label{gs_table_theo}
  \centering
  \begin{tabular}{ccccc}
    & \multicolumn{4}{c}{$\lambda$}  \\
    \cmidrule(r){2-5}
    $\delta$ & 0.5 & 0.2 & 0.1 & 0.01 \\
    \cmidrule(r){1-5}
    99\%    & 91\% & 71\% & 61\% & 51\% \\
    99.99\% & 99\% & 86\% & 72\% & 52\%
  \end{tabular}
  \vspace{0.5cm}
  \captionof{table}{Accuracy on the cats and dogs dataset (Real means the untouched test set)}
  \vspace{0.1cm}
  \label{cats_dogs}
  \begin{tabular}{ccc}
    Training & Testing & Testing (Real) \\
    \cmidrule(r){1-3}
    100\%  & 100\% & 43.2\% 
  \end{tabular}
\end{minipage}

\vspace{0.5cm}

\textbf{Results} The results can be seen in Table \ref{cats_dogs}. The network perfectly learns to classify both the train and test modified set, but utterly fails on the real test data. This proves that the handcrafted light feature was learnt by the network, and is used exclusively to classify images. All the features allowing to recognize a cat from a dog are still present in the data, but the low level features (\textit{e.g.} the presence of whiskers, how edges combine to form the shape of the animals and so on) are far less frequent than the light intensity, and thus were not learnt. \textit{The most frequent feature starved all the others.}

\section{Related work}\label{related}

The learning dynamics of neural networks have been explored for decades. \citet{baldi} studied the energy landscape of linear networks and the fixed point structure of gradient descent learning in that context. \citet{Heskes} developed a theory encompassing stochastic gradient descent and parameter dynamics and wrote down their evolution equations in on-line learning. However, those equations are heavily nonlinear and do not have closed-form solutions in the general case. \citet{saxe14} study the case of deep \emph{linear} networks trained with regression. They prove the existence of nonlinear learning phenomena similar to those seen in simulations of nonlinear networks and provide exact solutions to the dynamics of learning in the linear case. Some of our results are an extension of theirs to nonlinear networks. \citet{DBLP:journals/corr/ChoromanskaHMAL14,raghu2017svcca,saxe2015deep,yosinski2014transferable} also focus on neural network dynamics, while \citet{separable1,separable2,separable3} study the convergence rate of learning on separable data. \citet{DBLP:journals/corr/abs-1802-06509} prove that overparameterization can lead to faster optimization.

Recent work in the domain of generative adversarial networks \citep{gans} has shown the resurgence of the hinge loss \citep{journals/neco/RosascoVCPV04}. In particular, part of the success encountered by \citet{spectralnorm} is due to their use of that specific loss function. Their main contribution however is a spectral normalization technique that produces state-of-the-art results on image generation. Their paper is part of a larger trend focusing on the spectra of neural network weight matrices and their evolution during learning \citep{DBLP:journals/corr/VorontsovTKP17,odena,conf/nips/PenningtonSG17}. It, nevertheless, remains a poorly understood subject.

\citet{understandingZhang} performed some experiments proving that deep neural networks expound a so-called \textit{implicit regularization}. Even though they have the ability to entirely memorize the dataset, they still converge to solutions that generalize well. A variety of explanations for that phenomenon have been advanced: correlation between flatness of minima and generalization \citep{flatminima}, natural convergence of stochastic gradient descent towards such minima \citep{kleinbergSGD}, built-in hierarchical representations \citep{bengionature}, gradient descent naturally protecting against overfitting \citep{Advani2017HighdimensionalDO}, and structure of deep networks biasing learning towards simpler functions \citep{implicit_bias,function_map}. Our results from Section~\ref{starvation} suggest that gradient descent indeed has a beneficial effect, but can also hurt in some situations.

\section{Discussion}\label{discussion}

In order to obtain closed form solutions for the learning dynamics, we made the extremely simplifying assumption that each class only contains one point. We leave overcoming that limitation to future work. In the spirit of the proof in Section \ref{starvation} where we considered two datapoints, we might be able to obtain upper and lower bounds on the learning dynamics.

Our comparison between the cross-entropy and the hinge losses reveals fundamental differences. It is noteworthy that the hinge loss is an important ingredient of the recently introduced \textit{spectral normalization} \citep{spectralnorm}. The fast convergence of networks trained with the hinge loss might in part explain its beneficial impact. A deeper analysis of the connections between the two would lead to a better understanding of the performance of the algorithm.

In this paper, we introduce the concept of \textit{gradient starvation} and suggest that it might be a plausible explanation for the generalization abilities of deep neural networks. By focusing most of the learning on the frequent features of the dataset, it makes the network ignore the idiosyncrasies of individual datapoints. That rather desirable property has a downside however: very informative but rare features will not be learnt during training. This strongly limits the ability of the network to transfer to different data distributions where \textit{e.g.} the rare feature exists on its own.

\bibliography{iclr2019_conference}

\begin{thebibliography}{35}
\providecommand{\natexlab}[1]{#1}
\providecommand{\url}[1]{\texttt{#1}}
\expandafter\ifx\csname urlstyle\endcsname\relax
  \providecommand{\doi}[1]{doi: #1}\else
  \providecommand{\doi}{doi: \begingroup \urlstyle{rm}\Url}\fi

\bibitem[Advani \& Saxe(2017)Advani and Saxe]{Advani2017HighdimensionalDO}
Madhu~S. Advani and Andrew~M. Saxe.
\newblock High-dimensional dynamics of generalization error in neural networks.
\newblock \emph{CoRR}, abs/1710.03667, 2017.

\bibitem[Ariew(1976)]{occam}
Roger Ariew.
\newblock \emph{Ockham's Razor: A Historical and Philosophical Analysis of
  Ockham's Principle of Parsimony}.
\newblock PhD thesis, University of Illinois at Urbana-Champaign, 1976.

\bibitem[Arora et~al.(2018)Arora, Cohen, and
  Hazan]{DBLP:journals/corr/abs-1802-06509}
Sanjeev Arora, Nadav Cohen, and Elad Hazan.
\newblock On the optimization of deep networks: Implicit acceleration by
  overparameterization.
\newblock \emph{CoRR}, abs/1802.06509, 2018.
\newblock URL \url{http://arxiv.org/abs/1802.06509}.

\bibitem[Baldi \& Hornik(1989)Baldi and Hornik]{baldi}
P.~Baldi and K.~Hornik.
\newblock Neural networks and principal component analysis: Learning from
  examples without local minima.
\newblock \emph{Neural Networks}, 2:\penalty0 53--58, 1989.

\bibitem[Choromanska et~al.(2014)Choromanska, Henaff, Mathieu, Arous, and
  LeCun]{DBLP:journals/corr/ChoromanskaHMAL14}
Anna Choromanska, Mikael Henaff, Micha{\"{e}}l Mathieu, G{\'{e}}rard~Ben Arous,
  and Yann LeCun.
\newblock The loss surface of multilayer networks.
\newblock \emph{CoRR}, abs/1412.0233, 2014.
\newblock URL \url{http://arxiv.org/abs/1412.0233}.

\bibitem[Goodfellow et~al.(2014)Goodfellow, Pouget-Abadie, Mirza, Xu,
  Warde-Farley, Ozair, Courville, and Bengio]{gans}
Ian Goodfellow, Jean Pouget-Abadie, Mehdi Mirza, Bing Xu, David Warde-Farley,
  Sherjil Ozair, Aaron Courville, and Yoshua Bengio.
\newblock Generative adversarial nets.
\newblock In \emph{Advances in neural information processing systems}, pp.\
  2672--2680, 2014.
\newblock URL
  \url{http://papers.nips.cc/paper/5423-generative-adversarial-nets.pdf}.

\bibitem[Heskes \& Kappen(1993)Heskes and Kappen]{Heskes}
Tom~M. Heskes and Bert Kappen.
\newblock On-line learning processes in artificial neural networks, 1993.

\bibitem[Hochreiter \& Schmidhuber(1997)Hochreiter and Schmidhuber]{flatminima}
S.~Hochreiter and J.~Schmidhuber.
\newblock Flat minima.
\newblock \emph{Neural Computation}, 9\penalty0 (1):\penalty0 1--42, 1997.

\bibitem[Kaggle(2018)]{catsdogs}
Kaggle.
\newblock Dogs vs. cats, 2018.
\newblock URL \url{https://www.kaggle.com/c/dogs-vs-cats}.

\bibitem[Kingma \& Ba(2014)Kingma and Ba]{adam}
Diederik Kingma and Jimmy Ba.
\newblock Adam: A method for stochastic optimization, 2014.
\newblock URL \url{http://arxiv.org/abs/1412.6980}.
\newblock cite arxiv:1412.6980Comment: Published as a conference paper at the
  3rd International Conference for Learning Representations, San Diego, 2015.

\bibitem[Kleinberg et~al.(2018)Kleinberg, Li, and Yuan]{kleinbergSGD}
Robert Kleinberg, Yuanzhi Li, and Yang Yuan.
\newblock An alternative view: When does sgd escape local minima?
\newblock \emph{CoRR}, abs/1802.06175, 2018.
\newblock URL
  \url{http://dblp.uni-trier.de/db/journals/corr/corr1802.html#abs-1802-06175}.

\bibitem[LeCun et~al.(2015)LeCun, Bengio, and Hinton]{bengionature}
Yann LeCun, Yoshua Bengio, and Geoffrey~E. Hinton.
\newblock Deep learning.
\newblock \emph{Nature}, 521\penalty0 (7553):\penalty0 436--444, 2015.
\newblock URL
  \url{http://dblp.uni-trier.de/db/journals/nature/nature521.html#LeCunBH15}.

\bibitem[Liao \& Couillet(2018)Liao and Couillet]{couillet}
Zhenyu Liao and Romain Couillet.
\newblock The dynamics of learning: A random matrix approach.
\newblock In Jennifer~G. Dy and Andreas Krause (eds.), \emph{ICML}, volume~80
  of \emph{JMLR Workshop and Conference Proceedings}, pp.\  3078--3087.
  JMLR.org, 2018.
\newblock URL
  \url{http://dblp.uni-trier.de/db/conf/icml/icml2018.html#LiaoC18a}.

\bibitem[Miyato et~al.(2018)Miyato, Kataoka, Koyama, and Yoshida]{spectralnorm}
Takeru Miyato, Toshiki Kataoka, Masanori Koyama, and Yuichi Yoshida.
\newblock {S}pectral {N}ormalization for {G}enerative {A}dversarial {N}etworks,
  2018.
\newblock arXiv:1802.05957v1.

\bibitem[Nacson et~al.(2018)Nacson, Lee, Gunasekar, Srebro, and
  Soudry]{separable1}
Mor~Shpigel Nacson, Jason Lee, Suriya Gunasekar, Nathan Srebro, and Daniel
  Soudry.
\newblock Convergence of gradient descent on separable data.
\newblock \emph{CoRR}, abs/1803.01905, 2018.
\newblock URL
  \url{http://dblp.uni-trier.de/db/journals/corr/corr1803.html#abs-1803-01905}.

\bibitem[Neyshabur et~al.(2014)Neyshabur, Tomioka, and Srebro]{implicit_bias}
Behnam Neyshabur, Ryota Tomioka, and Nathan Srebro.
\newblock In search of the real inductive bias: On the role of implicit
  regularization in deep learning.
\newblock \emph{CoRR}, abs/1412.6614, 2014.
\newblock URL
  \url{http://dblp.uni-trier.de/db/journals/corr/corr1412.html#NeyshaburTS14}.

\bibitem[Odena et~al.(2018)Odena, Buckman, Olsson, Brown, Olah, Raffel, and
  Goodfellow]{odena}
Augustus Odena, Jacob Buckman, Catherine Olsson, Tom~B. Brown, Christopher
  Olah, Colin Raffel, and Ian~J. Goodfellow.
\newblock Is generator conditioning causally related to gan performance?
\newblock \emph{CoRR}, abs/1802.08768, 2018.
\newblock URL
  \url{http://dblp.uni-trier.de/db/journals/corr/corr1802.html#abs-1802-08768}.

\bibitem[Paszke et~al.(2017)Paszke, Gross, Chintala, Chanan, Yang, DeVito, Lin,
  Desmaison, Antiga, and Lerer]{pytorch}
Adam Paszke, Sam Gross, Soumith Chintala, Gregory Chanan, Edward Yang, Zachary
  DeVito, Zeming Lin, Alban Desmaison, Luca Antiga, and Adam Lerer.
\newblock Automatic differentiation in pytorch.
\newblock In \emph{NIPS-W}, 2017.

\bibitem[Pennington et~al.(2017)Pennington, Schoenholz, and
  Ganguli]{conf/nips/PenningtonSG17}
Jeffrey Pennington, Samuel~S. Schoenholz, and Surya Ganguli.
\newblock Resurrecting the sigmoid in deep learning through dynamical isometry:
  theory and practice.
\newblock In Isabelle Guyon, Ulrike von Luxburg, Samy Bengio, Hanna~M. Wallach,
  Rob Fergus, S.~V.~N. Vishwanathan, and Roman Garnett (eds.), \emph{NIPS},
  pp.\  4788--4798, 2017.
\newblock URL
  \url{http://dblp.uni-trier.de/db/conf/nips/nips2017.html#PenningtonSG17}.

\bibitem[Perez et~al.(2018)Perez, Camargo, and Louis]{function_map}
Guillermo~Valle Perez, Chico~Q. Camargo, and Ard~A. Louis.
\newblock Deep learning generalizes because the parameter-function map is
  biased towards simple functions.
\newblock \emph{CoRR}, abs/1805.08522, 2018.
\newblock URL
  \url{http://dblp.uni-trier.de/db/journals/corr/corr1805.html#abs-1805-08522}.

\bibitem[Poggio et~al.(2004)Poggio, Rifkin, Mukherjee, and Niyogi]{poggio}
Tomaso Poggio, Ryan Rifkin, Sayan Mukherjee, and Partha Niyogi.
\newblock General conditions for predictivity in learning theory.
\newblock \emph{Nature}, 428\penalty0 (6981):\penalty0 419--422, 2004.

\bibitem[Raghu et~al.(2017)Raghu, Gilmer, Yosinski, and
  Sohl-Dickstein]{raghu2017svcca}
Maithra Raghu, Justin Gilmer, Jason Yosinski, and Jascha Sohl-Dickstein.
\newblock Svcca: Singular vector canonical correlation analysis for deep
  understanding and improvement.
\newblock \emph{arXiv preprint arXiv:1706.05806}, 2017.

\bibitem[Rosasco et~al.(2004)Rosasco, Vito, Caponnetto, Piana, and
  Verri]{journals/neco/RosascoVCPV04}
Lorenzo Rosasco, Ernesto~De Vito, Andrea Caponnetto, Michele Piana, and
  Alessandro Verri.
\newblock Are loss functions all the same?.
\newblock \emph{Neural Computation}, 16\penalty0 (5):\penalty0 1063--107, 2004.
\newblock URL
  \url{http://dblp.uni-trier.de/db/journals/neco/neco16.html#RosascoVCPV04}.

\bibitem[Saxe et~al.(2013{\natexlab{a}})Saxe, McClelland, and
  Ganguli]{conf/cogsci/SaxeMG13}
Andrew~M. Saxe, James~L. McClelland, and Surya Ganguli.
\newblock Learning hierarchical categories in deep neural networks.
\newblock In Markus Knauff, Michael Pauen, Natalie Sebanz, and Ipke Wachsmuth
  (eds.), \emph{CogSci}. cognitivesciencesociety.org, 2013{\natexlab{a}}.
\newblock ISBN 978-0-9768318-9-1.
\newblock URL
  \url{http://dblp.uni-trier.de/db/conf/cogsci/cogsci2013.html#SaxeMG13}.

\bibitem[Saxe et~al.(2013{\natexlab{b}})Saxe, McClelland, and Ganguli]{saxe14}
Andrew~M. Saxe, James~L. McClelland, and Surya Ganguli.
\newblock Exact solutions to the nonlinear dynamics of learning in deep linear
  neural networks.
\newblock cite arxiv:1312.6120, 2013{\natexlab{b}}.
\newblock URL \url{http://arxiv.org/abs/1312.6120}.

\bibitem[Saxe(2015)]{saxe2015deep}
Andrew~Michael Saxe.
\newblock \emph{Deep Linear Neural Networks: A Theory of Learning in the Brain
  and Mind}.
\newblock PhD thesis, Stanford University, 2015.

\bibitem[Simonyan \& Zisserman(2014)Simonyan and Zisserman]{Simonyan14c}
K.~Simonyan and A.~Zisserman.
\newblock Very deep convolutional networks for large-scale image recognition.
\newblock \emph{CoRR}, abs/1409.1556, 2014.

\bibitem[Soudry et~al.(2017)Soudry, Hoffer, and Srebro]{separable3}
Daniel Soudry, Elad Hoffer, and Nathan Srebro.
\newblock The implicit bias of gradient descent on separable data.
\newblock \emph{CoRR}, abs/1710.10345, 2017.
\newblock URL
  \url{http://dblp.uni-trier.de/db/journals/corr/corr1710.html#abs-1710-10345}.

\bibitem[Tenenbaum \& Pollard(1985)Tenenbaum and Pollard]{ode}
Morris Tenenbaum and Harry Pollard.
\newblock \emph{{Ordinary Differential Equations : An Elementary Textbook for
  Students of Mathematics, Engineering, and the Sciences}}.
\newblock Dover, October 1985.
\newblock ISBN 0486649407.
\newblock URL \url{http://www.worldcat.org/isbn/0486649407}.

\bibitem[Vapnik(1998)]{vapnik}
Vladimir~Naumovich Vapnik.
\newblock \emph{Statistical Learning Theory}.
\newblock Wiley-Interscience, September 1998.

\bibitem[Vorontsov et~al.(2017)Vorontsov, Trabelsi, Kadoury, and
  Pal]{DBLP:journals/corr/VorontsovTKP17}
Eugene Vorontsov, Chiheb Trabelsi, Samuel Kadoury, and Chris Pal.
\newblock On orthogonality and learning recurrent networks with long term
  dependencies.
\newblock \emph{CoRR}, abs/1702.00071, 2017.
\newblock URL \url{http://arxiv.org/abs/1702.00071}.

\bibitem[Wiki.(2018)]{expi}
Wiki.
\newblock Exponential integral, 2018.
\newblock URL \url{https://en.wikipedia.org/wiki/Exponential_integral}.

\bibitem[Xu et~al.(2018)Xu, Zhou, Ji, and Liang]{separable2}
Tengyu Xu, Yi~Zhou, Kaiyi Ji, and Yingbin Liang.
\newblock Convergence of sgd in learning relu models with separable data.
\newblock \emph{CoRR}, abs/1806.04339, 2018.
\newblock URL
  \url{http://dblp.uni-trier.de/db/journals/corr/corr1806.html#abs-1806-04339}.

\bibitem[Yosinski et~al.(2014)Yosinski, Clune, Bengio, and
  Lipson]{yosinski2014transferable}
Jason Yosinski, Jeff Clune, Yoshua Bengio, and Hod Lipson.
\newblock How transferable are features in deep neural networks?
\newblock In \emph{Advances in neural information processing systems}, pp.\
  3320--3328, 2014.

\bibitem[Zhang et~al.(2016)Zhang, Bengio, Hardt, Recht, and
  Vinyals]{understandingZhang}
Chiyuan Zhang, Samy Bengio, Moritz Hardt, Benjamin Recht, and Oriol Vinyals.
\newblock Understanding deep learning requires rethinking generalization.
\newblock \emph{CoRR}, abs/1611.03530, 2016.
\newblock URL
  \url{http://dblp.uni-trier.de/db/journals/corr/corr1611.html#ZhangBHRV16}.

\end{thebibliography}
\bibliographystyle{iclr2019_conference}

\newpage

\renewcommand*{\thesection}{\Alph{section}}

\section*{Appendix \hypertarget{app-A}{A.}}
\setcounter{section}{1}

\subsection{Proof of Lemma \ref{modes_lemma}}\label{app-A1}

In this section, we prove the following lemma:

\textbf{Lemma \ref{modes_lemma}} \vspace{0.1cm} For any $k \in \{1, 2\}$, $x \in D_k$ and $t \geq 0$, the only non-negative elements of $W_t x$ are the ones with an index $i \in \mathcal{I}_k$. The signs of the coordinates of $Z_t$ remain the same throughout training.

\begin{proof} 
We prove this with a simple induction. The claim is true at $t=0$ from assumptions \hyperlink{H1}{(H1}\hyperlink{H2}{-3)}. Let us assume that at time set $t$, the different parts of the lemma are true. The SGD updates to the weights stemming from a single observation $x \in D_{k}$ (the extension to a mini-batch is straightforward) with a learning rate $\alpha$ are:
\begin{eqnarray}\label{full_ode}
    \Delta Z_t(x) = \alpha \delta_f(x) \hspace{0.1cm} (W_t x)_+   \hspace{2cm} \Delta W_t(x) = \alpha \delta_f(x) \hspace{0.1cm} (Z_t^T \otimes e_{k}) \hspace{0.1cm} x^T{\text{,}}
\end{eqnarray}

where $\otimes$ denotes the element-wise product of two vectors and $e_{k} \in \mathbb{R}^h$ is the binary vector with ones on the indices from $\mathcal{I}_k$. $\delta_f(x)$ is the gradient of the loss with respect to the pre-sigmoid output of the network $F_t(x)$: $\delta_f(x) = 1_{\{k=1\}} - \sigma(F_t(x))$. The updates to $Z_t$ are positive on its indices belonging to $\mathcal{I}_1$, and negative otherwise, proving the second claim of the lemma. Moving to $W_t$, only its rows and hidden neurons with indices $i \in \mathcal{I}_k$ are modified. For an element $x' \in D$, $W_{t+1} x' = W_{t} x' + \alpha \delta_f(x) (\sum_{i \in \mathcal{I}_k} z_i) \|x\|'$ where $z_i$ is the $i$-th element of $Z_t$. By induction, $\sum_{i \in \mathcal{I}_k} z_i$ has the same sign as $\delta_f(x)$ (positive on $D_1$, negative on $D_2$). From \hyperlink{H1}{(H1)}, $\|x\|'$ is positive (resp. negative) if $x'$ is in $D_{k}$ (resp. otherwise). The update keeps the $k$-th neuron active (resp. inactive).
\end{proof}

\subsection{Extension of Lemma \ref{modes_lemma} to multi-class classification}\label{app-A2}

Let us consider a simple $C$-classification task. $D = \{(x_i, y_i)\}_{1 \leq i \leq n} \subset \mathbb{R}^d \times C$ is our dataset of vectors/labels where $C$ denotes both the set of possible labels and its cardinal depending on the context. The classifier we are training is a simple neural network with one hidden layer of $C$ neurons, a ReLU non-linearity and a softmax $\sigma$. The full function is written as
\begin{equation}
    P_t(x) := \sigma(F_t(x)) := \sigma(Z_t (W_t x)_+)\,{\text{,}} 
\end{equation}
where $W_t$ (resp. $Z_t$) is a $C \times d$ (resp. $C \times C$) weight matrix and $\sigma$ the softmax function. The subscript $+$ denotes the positive part of a real number. $P_t(x)$ is a $C$-vector representing the probability that $x$ belongs to each class. The subscript $t$ denotes the state of the element at time step $t$ of training.

For a given class $k$, we let $D_k$ be the set of vectors belonging to class $k$. We make the following assumptions, with $k \neq k' \in C$ two arbitrary classes

\hspace{0.3cm} \hypertarget{H4}{(H4)} For any $x, x' \in D_k$, $x^T x' > 0$. For any $x \in D_k$ and $x' \in D_{k'}$, $x^T x' \leq 0 $.

\hspace{0.3cm} \hypertarget{H5}{(H5)} For any $x \in D_k$ and $x' \in D \setminus D_{k}$, $w^k_0 x > 0$ and $w^k_0 x' \leq 0$, where $w^k_0$ is the $k$-th row of $W_0$.

\hspace{0.3cm} (H6) $Z_0$ is initialized randomly to positive numbers on the diagonal, and non-positive elsewhere.

Those assumptions are straightforward extensions of the ones from the main text. We train the classifier using stochastic gradient descent on the cross-entropy loss of our problem. $F_{t+1}$ is the state of the neural network after one SGD update to $F_{t}$. Our first lemma states that over the course of training, the active neurons of the hidden layer remain the same for each element of the dataset, and the elements of $Z_t$ remain of the same sign.

\begin{lemma}
For any $k \in C$, $x \in D_k$ and $t \geq 0$, the only non-negative element of $W_t x$ is its $k$-th element and all diagonal (resp. non-diagonal) elements of $Z_t$ are positive (resp. negative).
\end{lemma}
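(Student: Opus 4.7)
The plan is to proceed by induction on the time step $t$, mirroring the strategy of Appendix~\ref{app-A1}. At $t=0$ the stated pattern of signs is exactly what assumptions \hyperlink{H5}{(H5)} and (H6) give us. The inductive step consists of writing down the SGD update for a single sample $(x, k)$ with $x \in D_k$ under the softmax cross-entropy loss and checking that the two sign patterns propagate forward.

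Assume the claim holds at time $t$. Because $(W_t x)_+$ then reduces to $a_k\, e_k$ with $a_k>0$, and because the softmax cross-entropy gradient with respect to the pre-softmax output $F_t(x) = Z_t (W_t x)_+$ is the familiar $P_t(x) - e_k$, the SGD update to $Z_t$ is a rank-one outer product
\begin{equation*}
\Delta Z_t = \alpha \bigl(e_k - P_t(x)\bigr)\, (W_t x)_+^T = \alpha\, a_k\, \bigl(e_k - P_t(x)\bigr)\, e_k^T.
\end{equation*}
This touches only the $k$-th column of $Z_t$: the diagonal entry $Z_{kk}$ moves further positive by $\alpha a_k (1-P_k)$, while each off-diagonal entry $Z_{jk}$ ($j\neq k$) moves further negative by $-\alpha a_k P_j$. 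All other columns of $Z_t$ are untouched, so the required sign pattern on $Z_{t+1}$ is preserved.

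For $W_t$, the ReLU indicator combined with the inductive hypothesis forces only its $k$-th row to move, with
\begin{equation*}
\Delta (W_t)_{k,\cdot} \;=\; \alpha\, \Bigl[(1-P_k)\, Z_{kk} \;-\; \sum_{j \neq k} P_j\, Z_{jk}\Bigr]\, x^T,
\end{equation*}
and the bracketed coefficient $c_t$ is strictly positive by the induction hypothesis on $Z_t$. The $k$-th component of $W_{t+1} x'$ for any $x' \in D$ is then shifted by $\alpha c_t\, (x^T x')$, which by \hyperlink{H4}{(H4)} is positive when $x' \in D_k$ and non-positive otherwise, while the remaining rows of $W_t$ are unchanged. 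Hence for $x' \in D_{k'}$ the unique active neuron of $W_{t+1} x'$ is still the $k'$-th one, closing the induction.

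The only real obstacle is bookkeeping: the softmax gradient couples all classes simultaneously, but the induction hypothesis collapses the forward pass on $x \in D_k$ to a single active neuron, which cleanly reduces every weight update to the same rank-one structure that appeared in the binary proof. Beyond this reduction, assumption \hyperlink{H4}{(H4)} plays exactly the role that \hyperlink{H1}{(H1)} played for Lemma~\ref{modes_lemma}, so no new analytical ingredient is required.
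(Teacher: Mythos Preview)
Your proof is correct and follows essentially the same inductive argument as the paper's own proof in Appendix~\ref{app-A2}: reduce the forward pass to a single active neuron, write the rank-one update to $Z_t$ from the softmax gradient $e_k - P_t(x)$, and then show that only the $k$-th row of $W_t$ moves with a positive scalar coefficient times $x^T$, so that \hyperlink{H4}{(H4)} controls the sign of $W_{t+1} x'$ for every $x'$. If anything, your version is slightly more explicit than the paper's, since you spell out why the coefficient $c_t = (1-P_k) Z_{kk} - \sum_{j\neq k} P_j Z_{jk}$ is strictly positive, whereas the paper simply names it $K$ and moves on.
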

\begin{proof} 
We prove this with a simple induction. The claim is true at $t=0$ from assumptions \hyperlink{H4}{(H4-6)}. Let us assume that at time set $t$, the different parts of the lemma are true. The SGD updates to the weight matrices stemming from a single observation $x \in D_{k^*}$ (the extension to a mini-batch is straightforward) and a learning rate $\alpha$ are:
\begin{eqnarray}
    \delta Z_t = \alpha \nabla_y L \hspace{0.1cm} (W_t x)_+^T  \hspace{2cm} \delta W_t = \alpha (Z_t^T \hspace{0.1cm} \nabla_y L \otimes e_{k^*}) x^T{\text{,}} 
\end{eqnarray}

where $\otimes$ denotes the element-wise product of two vectors and $e_{k^*}$ the $k^*$ basis vector of $\mathbb{R}^C$. $\nabla_y L$ is the gradient of $\log F_t(x)_{k^*}$ (the cross entropy loss for a sample from class $k^*$) with respect to the output of $Z_t$. One can show that $(\nabla_y L)_{k^*} = 1 - \frac{e^{F_t(x)_{k^*}}}{\sum_j e^{F_t(x)_j}}$ and $(\nabla_y L)_i = - \frac{e^{F_t(x)_{i}}}{\sum_j e^{F_t(x)_j}}$ for $i \neq k^*$ \textit{i.e.} $\nabla_y L = e_{k^*} - Y_t(x)$. The update to $Z_t$ is non-negative on the diagonal, and non-positive elsewhere, which proves the second claim of the lemma. As far as $W_t$ is concerned, only its $k^*$-th row is modified. For an element $x' \in D$, $W_{t+1} x' = W_{t} x' + K x^T x' e_{k^*}$ where $K$ is the dot product between the $k^*$-th column of $Z_t$ and $\nabla_y L$. By assumptions on the data, $x^T x'$ is positive (resp. negative) if $x'$ is in $D_{k^*}$ (resp. otherwise), so the update keeps the $k^*$-th neuron active (resp. inactive).
\end{proof}

The proof can be extended to any number of hidden layers $h \geq C$ by modifying assumptions \hyperlink{H5}{(H5-6)} using a partition $\{\mathcal{I}_1, \ldots, \mathcal{I}_C\}$ of $\{1, \ldots,  h\}$ similar to the binary classification case. Each set $\mathcal{I}_i$ in the partition describes the neurons active at the initialization of the network for an element $x \in D_i$. The modified assumptions are:

\hspace{0.3cm} (H5') For any $i \in \mathcal{I}_k$, $x \in D_k$ and $x' \in D \setminus D_{k}$, $w^i_0 x > 0$ and $w^i_0 x' \leq 0$.

\hspace{0.3cm} (H6') For any $i \in \mathcal{I}_k$, $j \notin \mathcal{I}_k$, $(Z_0)_{ki} > 0$ and $(Z_0)_{kj} \leq 0$.

The lemma then translates to those inequalities remaining true at any time $t \geq 0$.

\subsection{Proofs for Theorem \ref{dyn_bce}}\label{app-A3}

In this section we develop the proof of Theorem \ref{dyn_bce} of the main text.
\begin{proof}
We consider an update made using $x \in D_1$. Writing $y_t = w_t x$, our system follows the system of ordinary differential equations
\begin{equation}\label{odes}
y_t' = \frac{x^Tx \hspace{0.1cm} z_t}{1+e^{y_t z_t}},  \hspace{2cm}
z_t' = \frac{y_t}{1+e^{y_t z_t}}\,{\text{.}}
\end{equation}
Writing $x^Tx = \|x\|^2$, we see that $y_t y_t' = \|x\|^2 z_t z_t'$. The quantity  $y^2_t - \|x\|^2 z^2_t$ is thus an invariant of the problem. With $c := |y^2_0 - \|x\|^2 z^2_0|$, the solutions of Eq.~\ref{odes}  live on hyperbolas of equation
\begin{equation}
\left\{\hspace{0.2cm}
    \begin{aligned}
        y^2 - \|x\|^2 z^2 &= \hspace{0.3cm} c \hspace{2.8cm} \text{ if } y^2_0 - \|x\|^2 z^2_0 > 0, \\
        y^2 - \|x\|^2 z^2 &= -c \hspace{2.8cm} \text{ if } y^2_0 - \|x\|^2 z^2_0 < 0, \\
        y^2 - \|x\|^2 z^2 &= \hspace{0.3cm} 0 \hspace{2.8cm} \text{ if } y^2_0 - \|x\|^2 z^2_0 = 0. \\
    \end{aligned}
    \right.
\end{equation}

We start by treating the case of a degenerate hyperbola $c = 0$. We have $\forall t$, $y_t = \|x\| z_t$ (those quantities are both positive as $x \in D_1$ and \hyperlink{H2}{(H2-3)}). We let $u(t) := z_t w_t x = z_t y_t$ and see by combining the two equations in Eq.~\ref{odes} that
\begin{equation}\label{eq_c}
    u'(t) = \frac{2 \|x\| u(t)}{1+e^{u(t)}}\,{\text{.}}
\end{equation}

For any $u_f \geq u_0$, let $t = u^{<-1>}(u_f)$ ($u$ is a bijection from $\mathbb{R}^+ \rightarrow [u_0,+\infty[$ since its derivative is strictly positive). We have
\begin{equation}\label{integral}
    t = \frac{1}{2\|x\|} \int_{u_0}^{u_f} \frac{1+e^y}{y}dy = \frac{1}{2\|x\|}(\log(\frac{u_f}{u_0})+Ei(u_f)-Ei(u_0))\,{\text{,}}
\end{equation}
where $Ei(x) = - \int_{-x}^{\infty}\frac{e^{-u}}{u} du$ is the exponential integral. In the end, with the superscript \small <$-1$> \normalsize  denoting the inverse function 
\begin{equation*}
    u_f = u(t) = (\log+Ei)^{<-1>}(2\|x\|t+\log(u_0)+Ei(u_0))\,{\text{.}}
\end{equation*}

We let $\bar{u}$ denote the function $u(t)$ above for $\|x\|=1$, the solution for $\|x\| \neq 1$ can easily be inferred by a rescaling of $t$ in $\bar{u}$. For $x \in D_2$, the system of ODEs is
\begin{equation}\label{odes_2}
y_t' = - \frac{\|x\|^2 \hspace{0.1cm} z_t}{1+e^{-y_t z_t}}\,{\text{,}} \hspace{2cm}
z_t' = - \frac{y_t}{1+e^{-y_t z_t}}\,{\text{,}}
\end{equation}
the degeneracy assumption becomes $y_0 = -\|x\| z_0$ (we know from \hyperlink{H2}{(H2)} that $y_0 > 0$ and from \hyperlink{H3}{(H3)} that $z_0 < 0$). It can be shown similarly that $v(t) := z_t y_t$  verifies the equation $v'(t) = 2 \|x\| v(t) \sigma(v(t))$ with a negative initial condition. In other words, $u$ and $v$ follow symmetric trajectories on the positive/negative real line. Below, $\bar{u}$ and $\bar{v}$ denote those two trajectories for $\|x\|=1$ and initial conditions $u_0 > 0$ and $v_0 < 0$.

Let us now write $p_1 = |D_1| / |D|$ the fraction of points in the dataset belonging to $D_1$. Because we sample randomly from the dataset, this amounts to sampling $p_1$ (resp. $1-p_1$) points from $D_1$ (resp. $D_2$) for each time unit during training, ie to rescaling the time axis by $p_1$ for $D_1$ and $1-p_1$ for $D_2$. Formally, this allows us to quantify the performance of the network at any time $t$
\begin{equation}\label{proba}
\left\{\hspace{0.1cm}
\begin{aligned}
P_t(x \in D_1) &= \sigma(\bar{u}(\|x\|p_1t)) \hspace{3.1cm} \text{ if } x \in D_1, \\
P_t(x \in D_2) &= \sigma(-\bar{v}(\|x\|(1-p_1)t)) \hspace{2cm} \text{ if } x \in D_2,
\end{aligned}
\right.
\end{equation}
which concludes the proof for $c = 0$.

From Eq. \ref{integral}, we also see that
\begin{equation}\label{bound_1}
    t = \frac{1}{2\|x\|} \int_{u_0}^{u_f} \frac{1+e^y}{y}dy \geq \frac{1}{2\|x\|} \int_{u_0}^{u_f} e^{\frac{y}{2}}dy = 
    \frac{1}{\|x\|} (e^{\frac{u_f}{2}} - e^{\frac{u_0}{2}}) \,{\text{,}}
\end{equation}
where we used the inequality $\forall y \geq 0, e^{\frac{y}{2}} > y$. It eventually gives us
\begin{equation}\label{bound_2}
    u(t) \leq 2 \log(\|x\| t + e^{\frac{u_0}{2}})\,{\text{,}}
\end{equation}
a result in line with convergence rates obtained in \citet{separable3}.

Let us now study the non-degenerate case. We apply the classic change of coordinates
 \begin{eqnarray}
y_t = \sqrt{c} \cosh(\frac{\theta}{2}), \hspace{1cm} z_t = \frac{\sqrt{c}}{\|x\|} \sinh(\frac{\theta}{2}) \hspace{1cm} \text{ if } y_0^2 > \|x\|^2 z_0^2,  \label{over}\\
y_t = \sqrt{c} \sinh(\frac{\theta}{2}), \hspace{1cm} z_t = \frac{\sqrt{c}}{\|x\|} \cosh(\frac{\theta}{2}) \hspace{1cm} \text{ if } y_0^2 < \|x\|^2 z_0^2. \label{below}
\end{eqnarray} 
Since $y_t^2 + \|x\|^2 z_t^2 = c \cosh(\theta)$ and $y_t z_t = \frac{c}{2\|x\|} \sinh(\theta)$, we see that
\begin{equation}
    (y_t z_t)' = \frac{y_t^2 + \|x\|^2 z_t^2}{1+e^{y_t z_t}} = \frac{c}{2\|x\|} \cosh(\theta) \theta' = \frac{c \cosh(\theta)}{1+e^{c \sinh(\theta) / 2\|x\|}}\,{\text{,}}
\end{equation}
where the first equality used the system of equations Eq.~\ref{odes}. This gives us the dynamics of $\theta$ as
\begin{equation*}
     \theta' = \frac{2\|x\|}{1+e^{c \sinh(\theta) / 2\|x\|}}\,{\text{,}}
\end{equation*}
with an initial condition $\theta_0 = \cosh^{-1}(\frac{y_0^2 + \|x\|^2 z_0^2}{c})$. For any $\theta_f \geq \theta_0$, we see that $t = \theta^{<-1>}(\theta_f)$ verifies
\begin{equation*}
    t = \int_{\theta_0}^{\theta_f} \frac{1 + e^{c \sinh(\theta) / 2\|x\|}}{2\|x\|}d\theta\,{\text{.}}
\end{equation*}
There is no closed-form solution for that integral (that we know of). It can however be computed numerically. On Fig.~2 \textit{Right} of the main text, we plot the curves for $y_t$ and $\sigma(z_t y_t)$ for different values of $c$ and $\|x\|$. We obtain a sigmoidal shape similar to previously made empirical observations. We notice in particular that for larger values of $\|x\|$ the function converges faster.
\end{proof}

\subsection{Extension to h hidden neurons}\label{app-A4}

We now extend the result from Theorem \ref{dyn_bce} to the case with $h$ hidden neurons. Let us still consider an update made on $x \in D_1$. We know from assumptions and by Lemma \ref{modes_lemma} that the only active neurons in the network are indexed by $\mathcal{I}_1$. The network weights follow the evolution equations:
\begin{equation*}
(w^i_t)' = \frac{x^T \hspace{0.1cm} z^i_t}{1+e^{\sum_{j \in \mathcal{I}_1} z^j_t w^j_t x}}\,{\text{,}} \hspace{2cm}
(z^i_t)' = \frac{w^i_t x}{e^{\sum_{j \in \mathcal{I}_1} z^j_t w^j_t x}}\,{\text{.}}
\end{equation*}
We similarly define $y^i_t = w^i_t x$ which brings
\begin{equation*}
(y^i_t)' = \frac{x^T x \hspace{0.1cm} z^i_t}{1+e^{\sum_{j \in \mathcal{I}_1} z^j_t y^j_t}}\,{\text{,}} \hspace{2cm}
(z^i_t)' = \frac{y^i_t}{e^{\sum_{j \in \mathcal{I}_1} z^j_t y^j_t}}\,{\text{.}} 
\end{equation*}
The couple $(y^i_t, z^i_t)$ follows the same hyperbolic invariance, defined by a constant $c_i$. In the case where $\forall i \in \mathcal{I}_1,\, c_i = 0$, we see that $u^i_t := z^i_t y^i_t$ verifies
\begin{equation*}
(u^i_t)' = \frac{2 \|x\| \hspace{0.1cm} u^i_t}{1+e^{\sum_{j \in \mathcal{I}_1} u^j_t}}\,{\text{,}} 
\end{equation*}
and a simple summation on $i$ shows that $u_t := \sum_{i \in \mathcal{I}_1} u^i_t$ follows Eq.~\ref{eq_c}. The dynamics of the logit in this case are identical to the single active neuron case, the only difference is potentially its initial value.

\subsection{Proof of Corollary \ref{main-cor:times}}\label{app-A5}

\textbf{Corollary \ref{main-cor:times}} \hspace{0.1cm} Let $\delta$ be the required accuracy on the classification task (\textit{i.e.} $P_t(x \in D_1) \geq 1-\delta$ for $x \in D_1$). Under certain assumptions, the times $t^*_1$ and $t^*_2$ required to reach that accuracy for each classifier verify $\frac{t^*_2}{t^*_1} \approx \frac{\|x_1\|}{\|x_2\|}\frac{p}{1-p}$ where $\|x_k\|$ is the norm of vector $\|x_k\|$ from class $k$. 

\begin{proof}
We consider the case $c = 0$. The classification error drops at a rate proportional to $\|x_1\| p$ for $D_1$ and to $\|x_2\| (1-p)$ for $D_2$. More precisely, let us assume that $u_0 \leq |v_0|$ and look for a classification confidence of $1-\delta$. We write $u_f = \sigma^{<-1>}(1-\delta)=\log(1 / \delta - 1)$, $t_v = \bar{u}^{<-1>}(|v_0|)$ and 
\begin{equation*}
t^* = \bar{u}^{<-1>}(u_f)=\frac{1}{2}(\log(\frac{\log(1 / \delta - 1)}{u_0})+Ei(\log(1 / \delta - 1))-Ei(u_0))\,{\text{,}} 
\end{equation*}
$t^*$ represents the time taken to reach confidence $\delta$ with an initialization $u_0$, and $t_v$ the time to reach $v_0$ starting in $u_0$. We see that
\begin{equation*}
\begin{aligned}
P(x \in D_1) \geq 1-\delta &\iff t \geq t_1^* = \frac{t^*}{\|x_1\| p}  \hspace{3.6cm} \text{ if } x \in D_1, \\
P(x \in D_2) \geq 1-\delta &\iff t \geq t_2^* = \frac{t^*-t_v}{\|x_2\| (1-p)}  \hspace{2.7cm} \text{ if } x \in D_2\,{\text{.}}
\end{aligned}
\end{equation*}

The ratio between the convergence times reads $\frac{t^*_2}{t^*_1} = \frac{\|x_1\|}{\|x_2\|}\frac{p}{1-p}(1-\frac{t_v}{t^*})$. One sees that if the weight initializations $u_0$ and $v_0$ are close (\textit{i.e.} $t_v$ is small) and the confidence requirement large (\textit{i.e.} $t^*$ is large), the ratio is approximately $\frac{\|x_1\|}{\|x_2\|}\frac{p}{1-p}$.
\end{proof}

\subsection{Top-left quadrant initialization}\label{app-A7}

When the initial conditions of the network verify $y_0 = - \|x\| z_0$, then at all time $t$, $y_t = - \|x\| z_t$. Plugging that equality in Eq.~\ref{odes} results in
\begin{equation*}
    u'(t) = \frac{-2 \|x\| u(t)}{1+e^{u(t)}}
\end{equation*}
where again $u(t) = z_t y_t$. This means that the logit is negative and increasing. Let $u_0 < u_f < 0$, we see that the time at which $u$ reaches $u_f$ verifies
\begin{equation}
    t = - \frac{1}{2\|x\|} \int_{u_0}^{u_f} \frac{1+e^y}{y}dy = \frac{1}{2\|x\|} \int_{-u_f}^{-u_0} \frac{1+e^{-y}}{y}dy \geq \log(-u_0) - \log(-u_f)\,{\text{.}} 
\end{equation}
$t$ diverges to $+\infty$ as $u_f$ tends to $0$ from below. The logit converges to $0$ without ever reaching it.

\subsection{Relaxing the single datapoint assumption}\label{app-A8}

One of the major assumptions made in the main text is the fact that each class contains a single element. In this section, we slightly relax it to the case where the points $(\{x_i\}_{1 \leq i \leq m} \subset \mathbb{R}^d)$ in a class are all orthogonal to one another\footnote{This implies in particular $m \leq d$.} (while still verifying Assumption \hyperlink{H1}{(H1)}). In that case, each presentation of a training vector will only affect $w_t$ in the direction of that specific vector. Let $y_t^i$ denote $w_t x_i$, the unnormalized component of $w_t$ along $x_i$. We consider a batch update on the weights of the neural network. In that case:
\begin{equation*}
(y^i_t)' = \frac{\|x_i\|^2 \hspace{0.1cm} z_t}{1+e^{z_t y^i_t}}\,{\text{,}} \hspace{2cm}
(z_t)' = \displaystyle{\sum_{i=1}^m} \frac{y^i_t}{1 + e^{z_t y^i_t}}\,{\text{.}} 
\end{equation*}
Assuming that the vectors all have the same norm (denoted $\|x\|$ below) and that the $y_0^i$ are all equal, then that equality remains true at all time (they follow the same update equation). We let $y_t$ denote that value:
\begin{equation*}
(y_t)' = \frac{\|x\|^2 \hspace{0.1cm} z_t}{1+e^{z_t y_t}}\,{\text{,}} \hspace{2cm}
(z_t)' =  \frac{m y_t}{1 + e^{z_t y_t}}\,{\text{.}} 
\end{equation*}
A new invariant appears in those equations: $c := |m y_0^2 - \|x\|^2 z^2_0|$. In the case $c = 0$ (the other case can be treated as above), we obtain the following evolution equation for the logit of any point in the class:
\begin{equation*}
    u'(t) = \frac{2 \sqrt{m} \|x\| u(t)}{1+e^{u(t)}}\,{\text{.}}
\end{equation*}
We end up with a similar equation than before except for the $\sqrt{m}$ factor, which boosts the convergence speed. However, one should not forget that we are now training on a full batch (\textit{i.e.} on $m$ points) during each unit of time. Performing the same number of updates for the single point class would generate a $m$ factor in the convergence speed of $u$ (one $\sqrt{m}$ factor for each $y$ and $z$ functions). The slower convergence for the more general case can be explained by the fact that each point is making an update on $w_t$ in its own direction. That direction being orthogonal to all others points makes it useless for their classification.

\subsection{Relaxing assumption (H2)}\label{app-A6}

Let us first recall that the assumption states:

\hspace{1.5cm} (H2) For any $i \in \mathcal{I}_k$, $x \in D_k$ and $x' \notin D_k$, $w^i_0 x > 0$ and $w^i_0 x' \leq 0$.

We now assume that $h = 2$ and study the evolution of the first row $w^1_t$ of matrix $W_t$, written $w_t$ in the following. We relax assumption (H2) by assuming that there is a point $x_2$ in $D_2$ such that $w_0 x > 0$. And we consider updates to $w_t$ coming from sampling equally $x_1$ from $D_1$ and $x_2$ from $D_2$. The evolution equations can be written as
\begin{equation*}
w_t' = \frac{x_1^T ~ z_t}{1+e^{z_t w_t x_1}} - \frac{x_2^T ~ z_t}{1+e^{-z_t w_t x_2}}\,{\text{,}} \hspace{2cm}
z_t' = \frac{w_t ~ x_1 }{1+e^{z_t w_t x_1}} - \frac{w_t ~ x_2 }{1+e^{-z_t w_t x_2}}\,{\text{.}}
\end{equation*}
In order to make the analysis simpler, we assume that $x_1^T x_2 = 0$. We write $\alpha_t = w_t x_1$ and $\beta_t = w_t x_2$. Any component of $w_0$ orthogonal to both $x_1$ and $x_2$ will be untouched by the updates, and does not affect the classification performance of the network. This gives us
\begin{equation}\label{system_ode_supp}
\alpha_t' = \frac{\|x_1\|^2 z_t}{1+e^{z_t \alpha_t}}\,{\text{,}} \hspace{1cm} \beta_t' =  -\frac{\|x_2\|^2 z_t}{1+e^{-z_t \beta_t}}\,{\text{,}} \hspace{1cm}
z_t' = \frac{\alpha_t}{1+e^{z_t \alpha_t}} - \frac{\beta_t}{1+e^{-z_t \beta_t}}\,{\text{.}}
\end{equation}

By assumption, we know that $\alpha_0, \beta_0 > 0$. The system of ODEs (\ref{system_ode_supp}) is invariant through the transformation $(\alpha_t, \beta_t, z_t) \rightarrow (\beta_t, \alpha_t, -z_t)$ so it is sufficient to study the case $z_0 \geq 0$. Let us now state the theorem from the main text.

\textbf{Theorem 3.4.}\vspace{1em} \textit{Letting $c := \alpha_0^2 /\|x_1\|^2  + \beta_0^2 / \|x_2\|^2 - z_0^2$, the solutions of (\ref{system_ode_supp}) verify for all $t \geq 0$, $\alpha_t^2 / \|x_1\|^2  + \beta_t^2 / \|x_2\|^2  - z_t^2 = c$. In other terms, they live on hyperboloids (see Fig.~\ref{multi_plot} from the main text and Fig.~\ref{hyperboloid_two_sheets}).}

\textit{If $c \leq 0$, $\beta_t$ reaches $0$ at some point during training (see Fig.~\ref{hyperboloid_two_sheets}).
If $c > 0$, there exists a curve $\mathcal{C}_c$ on the hyperboloid such that as $t \rightarrow +\infty$,  for any initialization $(\alpha_0, \beta_0, z_0) \in \mathcal{C}_c$: 
\begin{equation*}
\alpha_t \rightarrow (\frac{1}{\|x_1\|^2} + \frac{1}{\|x_2\|^2})^{-1/2}  \sqrt{c}\,{\text{,}} \hspace{1cm} \beta_t \rightarrow (\frac{1}{\|x_1\|^2} + \frac{1}{\|x_2\|^2})^{-1/2}  \sqrt{c}\,{\text{,}} \hspace{1cm}
z_t \rightarrow 0\,{\text{.}}
\end{equation*}
That curve defines two regions of the initialization space. In one (colored yellow on Fig.~\ref{multi_plot} \textit{Left}), trajectories verify $\alpha_t = 0$ for some $t$, in the other (colored green) $\beta_t$ reaches $0$ at some point.}

\begin{proof}
It is easy to see from the system of ODEs (\ref{system_ode_supp}) that $(\alpha_t^2)' / \|x_1\|^2  + (\beta_t^2)' / \|x_2\|^2  - (z_t^2)' = 0$, which directly gives the invariance of $\alpha_t^2 / \|x_1\|^2  + \beta_t^2 / \|x_2\|^2  - z_t^2$. This implies that the trajectories $(\alpha_t, \beta_t, z_t)$ live on hyperboloids.

If $c < 0$, the hyperboloid has two sheets (see Fig.~\ref{hyperboloid_two_sheets}). In particular, the trajectories verify: $z_t^2 = \alpha_t^2 / \|x_1\|^2 + \beta_t^2 / \|x_2\|^2 - c \geq -c$, which means that $z_t$ is bounded away from 0. As long as $\beta_t \geq 0$, we have $\beta_t' =  -\frac{\|x_2\|^2 z_t}{1+e^{-z_t \beta_t}} \leq \frac{\|x_2\|^2 c}{2}$. This implies that $\beta_t$ will reach 0 in a finite time since it decreases at a rate larger than a strictly positive number. At that point, the ReLU ensures that $\beta_t$ does not evolve anymore, and that $\beta_t$'s contribution to $z_t$ disappears. The evolution equations turn into the ones studied in the previous paragraphs, plotted as the green hyperbola in the plane $\beta = 0$ in Fig.~\ref{hyperboloid_two_sheets}.

If $c = 0$, we know that $z_t^2 = \alpha_t^2 / \|x_1\|^2 + \beta_t^2 / \|x_2\|^2 \geq \alpha_t^2 / \|x_1\|^2 \geq \alpha_0^2 / \|x_1\|^2$ ($\alpha_t$ increases as long as $z_t$ is positive), so the same argument about $\beta_t$ holds.

If $c > 0$, let us first note that the point $((\frac{1}{\|x_1\|^2} + \frac{1}{\|x_2\|^2})^{-1/2}  \sqrt{c}, (\frac{1}{\|x_1\|^2} + \frac{1}{\|x_2\|^2})^{-1/2}  \sqrt{c}, 0)$ belongs to the hyperboloid and is stationary (the three derivatives are 0).
Classic results on ordinary differential equations \citep{ode} then give the result. Finding a closed-form solution to the shape of $\mathcal{C}_c$ is to the best of our knowledge impossible. One can however obtain an approximation by considering a point $((\frac{1}{\|x_1\|^2} + \frac{1}{\|x_2\|^2})^{-1/2}  \sqrt{c-\epsilon}, (\frac{1}{\|x_1\|^2} + \frac{1}{\|x_2\|^2})^{-1/2}  \sqrt{c+\epsilon}, 0)$ for a small $\epsilon$ and applying finite difference methods to the evolution equations (\ref{system_ode_supp}) to build the trajectory.
\end{proof}

\begin{figure}
\centering
    \includegraphics[width=0.7\textwidth]{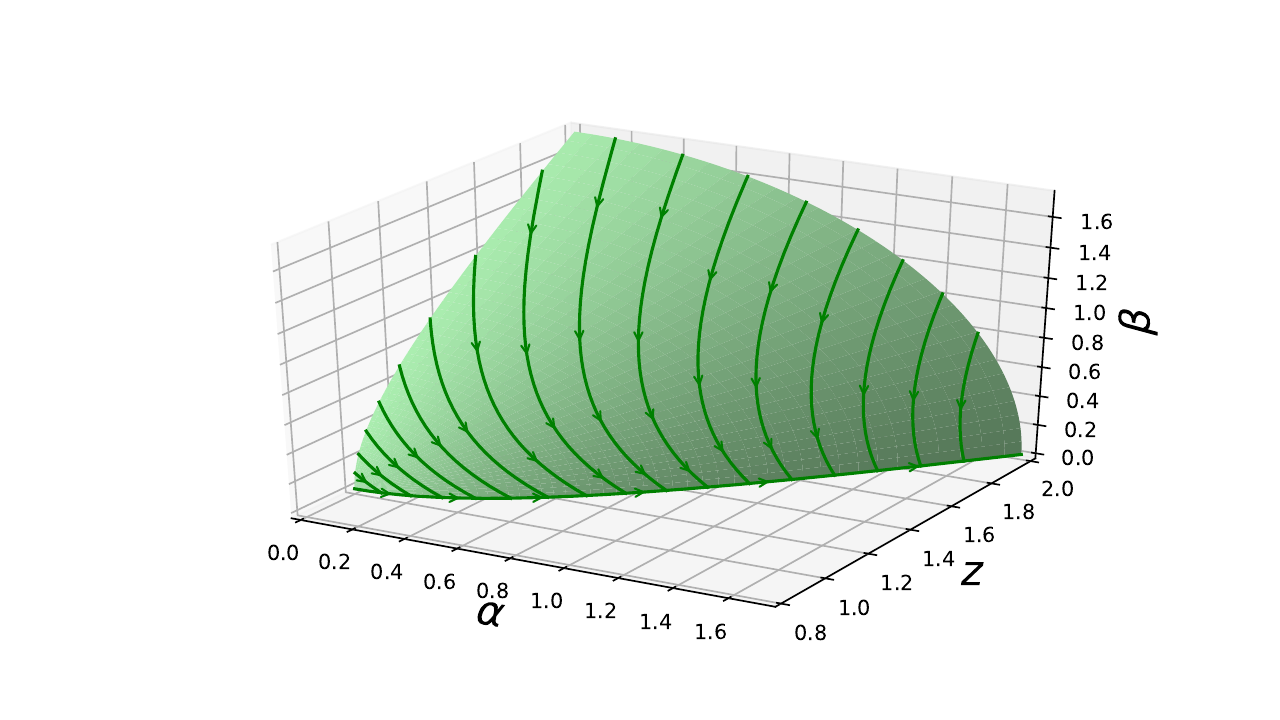}
\caption{Solutions of the ODE system for different initializations and $c = -1$. Trajectories live on a hyperboloid of two sheets. Any initialization on that surface will result in $\beta_t$ reaching 0, or in other terms in class $D_1$ prevailing. This curve and Fig.~\ref{multi_plot} from the main text are plotted with $\|x_1\| = \|x_2\| = 1$.}\label{hyperboloid_two_sheets}
\end{figure}

\section*{Appendix \hypertarget{app-B}{B}: Deeper Neural Networks}
\setcounter{section}{2}

In this section we study the case of a deeper network with $N-1$ hidden layers. 
Similarly to above, we can prove the existence of independent modes of learning. To that end, neurons need to be activated in a disjoint manner from one class to the other. Due to the growing complexity of the interactions between the parameters of the network, this requires very strong assumptions on the initialization of the network and on the shape of the network. Assuming that the network is written as
\begin{equation*}
   P_t(x) := \sigma(Z_t^T(Z_t^{N-2}\cdots(Z_t^1 (W_tx)_+)_+\ldots)_+)\,{\text{,}} 
\end{equation*}
with $W_t$ an $h \times d$ matrix and for all $1 \leq i \leq N-2$, $Z^i_t$ an $h \times h$ matrix. We maintain assumption \hyperlink{H2}{(H2)} from the main text, and extend \hyperlink{H3}{(H3)} to all $Z^i_t$ by assuming that they are diagonal, and that the $j$-th element of their diagonal is positive if $j \in \mathcal{I}_1$, negative otherwise.
To simplify notations, we go back to assuming $h=2$ and take an update on $x \in D_1$. Only the first elements of every matrix are modified, we write them $z^i_t$ and keep the notations $z_t$, $w_t$ and $y_t$. We can then write the evolution equations:
\begin{equation*}\label{multi_inter}
    z_t' = \frac{z^{N-2}_t \cdots z^{1}_t y_t}{1+e^{u(t)}}\,{\text{,}} \hspace{1cm}
    (z^{i}_t)' = \frac{z_t z_t^{N-2} \cdots z^{i+1}_t z^{i-1}_t \cdots y_t}{1+e^{u(t)}}\,{\text{,}} \hspace{1cm}
    y_t' = \frac{\|x\|^2 z_t z_t^{N-2} \cdots z^{1}_t}{1+e^{u(t)}}\,{\text{,}}
\end{equation*}
with $u(t) = z_t ~ \Pi z^i_t ~ y_t$. Assuming that $z_0 = z_0^1 = \ldots = z^{N-1}_0 = \frac{y_0}{\|x\|}$, we see that those equals remain true throughout training. This gives us 
\begin{equation*}
z_t' = \frac{(z_t)^{N-1} \|x\|}{1+e^{u(t)}}\,{\text{,}} \hspace{1.5cm} u(t) = (z_t)^N \|x\|\,{\text{,}} \hspace{1.5cm} u'(t) = N (z_t)^{N-1} z_t' \|x\|\,{\text{.}}
\end{equation*}
Combining those equations gives us the ODE verified by the logit of our system
\begin{equation}\label{multi}
    u'(t) = \frac{N \|x\|^{2/N} u^{2-2/N}(t)}{1+e^{u(t)}}\,{\text{.}}
\end{equation}
The solution of that equation for $N=4$ and $N=8$ can be found on Fig.~\ref{multi_plot_app}. Here too, a sigmoidal shape appears during the learning process. The effect of $\|x\|$ reduces as $N$ grows due to the power $2/N$, however, larger values still converge faster (\textit{e.g.} the blue and yellow curves). Additionally, as noted in \citet{saxe14} for linear networks: \textit{the deeper the network, the faster the learning}. This fact is studied in more details in \citet{DBLP:journals/corr/abs-1802-06509} where depth is shown to accelerate convergence in some cases.

\begin{figure}[h]
\centering
\includegraphics[width=0.45\textwidth]{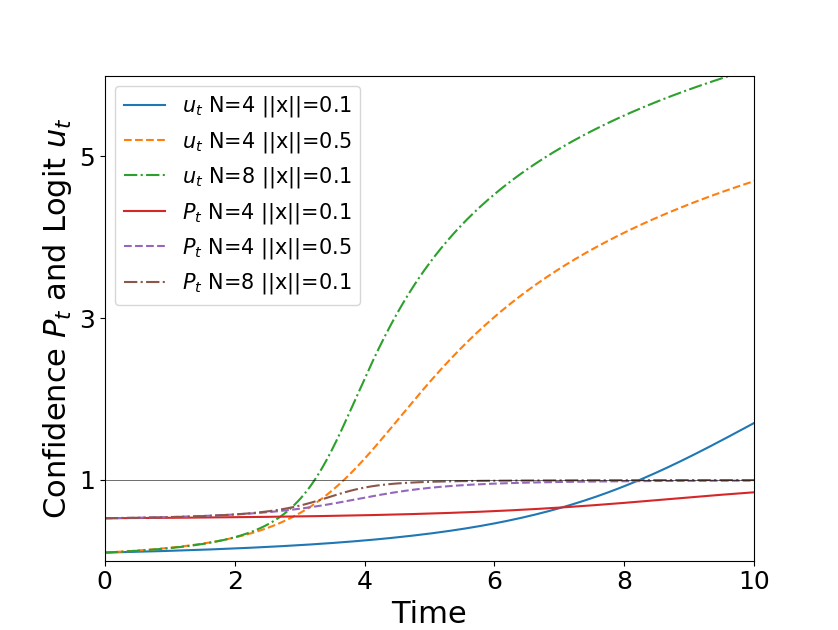}
\caption{Logit $u(t)$ and confidence $P_t$ for different number of layers and values of $\|x\|$.}\label{multi_plot_app}
\end{figure}

\section*{Appendix \hypertarget{app-C}{C}: Hinge Loss}
\setcounter{section}{3}
\setcounter{subsection}{0}

\subsection{Proof of Theorem \ref{dyn_hinge}}\label{app-C1}

In this section we prove Theorem \ref{dyn_hinge} from the main text on the dynamics of learning in the case of the Hinge loss:
\begin{equation*}
    L_{H}(W_t, Z_t; x) = \max(0, 1 - Z_t^T (W_tx)_+ \cdot (\mathbbm{1}_{x \in D_1} - \mathbbm{1}_{x \in D_2}))\,{\text{.}}
\end{equation*}
Let us consider updates made after observing a point $x \in D_1$, the converse can be treated similarly with a simple change of sign. The system of ordinary differential equations verified by the parameters of our network is: 
\begin{equation}\label{odes_hinge}
y_t' = \|x\|^2 \hspace{0.1cm} z_t\,{\text{,}} \hspace{2cm} z_t' = y_t\,{\text{.}}
\end{equation}
The same relation between $y_t$ and $z_t$ appears in those equations than in the cross-entropy case. Defining $c$ similarly, we have $u'(t) = 2 \|x\| u(t)$ in the case where $c=0$, leading to $u(t) = u_0 e^{2 \|x\| t}$. When $c \neq 0$, the same change of variables can be applied and leads to 
\begin{equation}\label{res_hinge}
y_t = \sqrt{c} \cosh(\frac{\theta_0}{2} + \|x\| t)\,{\text{,}} \hspace{0.6cm} z_t = \frac{\sqrt{c}}{\|x\|} \sinh(\frac{\theta_0}{2} + \|x\| t)\,{\text{,}} \hspace{0.6cm} u_t = \frac{c}{2 \|x\|} \sinh(\theta_0 + 2 \|x\| t)\,{\text{,}} 
\end{equation}
with $\theta_0 = \cosh^{-1}(\frac{y_0^2 + \|x\|^2 z_0^2}{c})$. Those equations are only valid until $u_t$ reaches $1$\footnote{We are considering class $\{1\}$ here, but the equivalent can be proven for class $\{-1\}$.}. At that point learning stops, the network has converged. If the initialization is such that that condition is already verified, then the weights will not change as they already solve the task. The learning curves along with the initialization diagram can be found in Fig.~\ref{plot_hinge}. We notice a \textit{hard} sigmoidal shape, corresponding to learning stopping when $u_t$ reaches $1$.

\subsection{General treatment}\label{app-C2}

Let us now consider the general case of a class containing an arbitrary number of points $D_1 = \{x_i\}_{1 \leq i \leq m} \subset \mathbb{R}^d$. We consider the case of updates done in full batches (standard gradient descent in other words). In that case, we see that the network obeys the following dynamics:
\begin{equation*}
w_t' = z_t \displaystyle{\sum_{i=1}^{m}} x_i^T\,{\text{,}} \hspace{2cm}
z_t' = w_t \displaystyle{\sum_{i=1}^{m}} x_i^T\,{\text{.}}
\end{equation*}
Letting $X = \displaystyle{\sum_{i=1}^{m}} x_i$ denote the sum of all the datapoints in that class, we see that those dynamics boil down to our previous treatment for a single point. The same cases appear, depending on the value of $c := |(w_0 X)^2 - \|X\|^2 z^2_0|$. We explicitly treat the $c > 0$ case. Following the methods above, we see that: $w_t = y_t \frac{X^T}{\|X\|^2} + w_0^{\perp}$ where $y_t = \sqrt{c} \cosh(\frac{\theta_0}{2} + \|X\| t)$ and $w_0^T$ is the component of $w_0$ orthogonal to $X$ (and thus unchanged during training). With $z_t$ and $u_t$ defined as above (with $X$ instead of $x$), an arbitrary example $x$ is then classified as
\begin{equation*}
P(x \in D_1) = z_t  y_t \frac{X^T x}{\|X\|^2} + z_t w_0^{\perp} x = u_t \frac{X^T x}{\|X\|^2} + z_t w_0^{\perp} x \,{\text{.}}
\end{equation*}

\section*{Appendix \hypertarget{app-D}{D}: Gradient Starvation}
\setcounter{section}{4}

In this section, we prove a relaxed version of Theorem \ref{thm-main:gs} from the main text:
\begin{theorem}
Let $\delta$ be our confidence requirement on class $D_1$ \textit{i.e.} the training stops as soon as $\forall x \in D_1,\,  P_{t}(x \in D_1) \geq 1 - \delta$. Let $t^*$ denote that instant \textit{i.e.} $z_{t^*} \alpha_{t^*} = \log(\frac{1-\delta}{\delta})$. If $\beta_0 < 0$, the inequality (9) from the main text is valid. Otherwise, with $w_0 = (\alpha_0 x_1, \beta_0 x_2) + (x_1^{\perp},  x_2^{\perp})$, we have
\begin{equation*}
    P_{t^*}((0, x_2) \in D_1) \leq \frac{1}{1+e^{-\lambda \log(\frac{1-\delta}{\delta}) - z_{t^*} (\beta_0 - \alpha \alpha_0)}}\,{\text{.}}
\end{equation*}
\end{theorem}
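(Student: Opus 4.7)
The plan is to rerun the main-text proof of Theorem~\ref{thm-main:gs} up to the differential inequality $\alpha_t' \geq \beta_t'/\lambda$, and then to integrate it without the simplifying assumption $\alpha_0 \geq \beta_0/\lambda$, tracking the resulting constant of integration as a correction term. First I would appeal to Lemma~\ref{modes_lemma} to obtain independent mode learning and the positivity of $z_t$; note that the derivation of the coupled ODEs on $(\alpha_t, \beta_t, z_t)$ in the main proof uses only hypotheses (H2-3) and places no constraint on the relative sizes of $\alpha_0$ and $\beta_0$. Positivity of $z_t$ and of the factor $z_t/(1+e^{z_t(\alpha_t+\beta_t)})$ immediately shows that $\beta_t$ is strictly increasing for all $t\geq 0$, regardless of the sign of $\beta_0$.

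Next, the key inequality $\alpha_t' \geq \beta_t'/\lambda$ from the main-text proof rests only on $z_t > 0$ and on the nonnegativity of the omitted term $(1-\lambda)\,z_t/(1+e^{z_t \alpha_t})$; it therefore carries over to the present setting unchanged. Integrating it from $0$ to $t$ gives
\begin{equation*}
\beta_t \leq \lambda \alpha_t + (\beta_0 - \lambda \alpha_0)\,{\text{.}}
\end{equation*}
Existence of $t^*$ is guaranteed as in the main proof by strict monotonicity of $z_t\alpha_t$. Multiplying the displayed inequality by $z_{t^*} > 0$ and substituting $z_{t^*}\alpha_{t^*} = \log((1-\delta)/\delta)$ yields
\begin{equation*}
z_{t^*}\beta_{t^*} \leq \lambda \log\!\Bigl(\frac{1-\delta}{\delta}\Bigr) + z_{t^*}(\beta_0 - \lambda \alpha_0)\,{\text{,}}
\end{equation*}
and since $P_{t^*}((0,x_2)\in D_1) = \sigma(z_{t^*}\beta_{t^*})$ with $\sigma$ monotone increasing, applying $\sigma$ to both sides produces the claimed bound in the case $\beta_0 \geq 0$.

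For the case $\beta_0 < 0$, hypothesis (H2) forces $\alpha_0 > 0$, so $\beta_0 - \lambda\alpha_0 < 0$ and the correction term $z_{t^*}(\beta_0 - \lambda\alpha_0)$ is negative; discarding it only weakens the inequality, which is exactly inequality~(9) of the main text. I expect the only real obstacle to be confirming that the differential inequality $\alpha_t' \geq \beta_t'/\lambda$ still holds once neither the ordering $\alpha_0 \geq \beta_0/\lambda$ nor the sign of $\beta_t$ is controlled, but inspecting the main-text derivation shows it hinges only on the nonnegativity of a single logistic term, which is automatic. Everything after that is bookkeeping of the integration constant $\beta_0 - \lambda\alpha_0$.
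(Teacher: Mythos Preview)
Your treatment of the case $\beta_0 \geq 0$ is correct and coincides with the paper's: integrate $\alpha_t' \geq \beta_t'/\lambda$ from $0$, carry the constant $\beta_0 - \lambda\alpha_0$, and apply $\sigma$.

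The gap is in the case $\beta_0 < 0$. Your claim that the inequality $\alpha_t' \geq \beta_t'/\lambda$ ``hinges only on the nonnegativity of a single logistic term'' is not right. From Eq.~\ref{gs_eq} one has the \emph{identity}
\[
\alpha_t' \;=\; \beta_t' + (1-\lambda)\,\frac{z_t}{1+e^{z_t\alpha_t}},
\]
and nonnegativity of the second summand only yields $\alpha_t' \geq \beta_t'$. To reach $\alpha_t' \geq \beta_t'/\lambda$ the main-text proof replaces $\tfrac{z_t}{1+e^{z_t\alpha_t}}$ by the smaller quantity $\tfrac{z_t}{1+e^{z_t(\alpha_t+\beta_t)}}=\beta_t'/\lambda$, and this comparison requires $e^{z_t\alpha_t}\leq e^{z_t(\alpha_t+\beta_t)}$, i.e.\ $\beta_t \geq 0$. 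When $\beta_t < 0$ the inequality actually \emph{reverses}: $\alpha_t' < \beta_t'/\lambda$. Consequently, on the initial interval where $\beta_t<0$ you cannot integrate in the direction you want, and the bound $\beta_t \leq \lambda\alpha_t + (\beta_0 - \lambda\alpha_0)$ need not hold there; your subsequent step of ``discarding the negative correction'' is then unsupported.

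The paper avoids this by a short case split. If $\beta_{t^*}<0$ then $P_{t^*}((0,x_2)\in D_1)=\sigma(z_{t^*}\beta_{t^*})<\tfrac12$ and inequality~(9) is immediate. Otherwise, since $\beta_t$ is strictly increasing there is a time $\tilde t<t^*$ with $\beta_{\tilde t}=0$; on $[\tilde t,t^*]$ one has $\beta_t\geq 0$, the differential inequality $\alpha_t'\geq \beta_t'/\lambda$ is legitimate, and integrating from $\tilde t$ (where $\beta_{\tilde t}=0\leq \lambda\alpha_{\tilde t}$) recovers $\beta_{t^*}\leq \lambda\alpha_{t^*}$ and hence inequality~(9). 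Your argument can be repaired by inserting exactly this restart at $\tilde t$.
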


\begin{proof} We start with the $\beta_0 < 0$ case. If $\beta_{t^*}$ is negative, the result from the main text clearly holds. Otherwise, there exists $\tilde{t} < t^*$ such that $\beta_{\tilde{t}} = 0$ ($\beta_t$ is increasing). The proof of Theorem \ref{thm-main:gs} from the main text can then directly be applied to $[\tilde{t}, t^*]$.
If $\beta_0 > 0$, the inequality on $\alpha_t'$ and $\beta_t'$ holds and gives $\beta_t \leq \beta_0 + (\alpha_t - \alpha_0) \lambda$. Plugging it into $P_{t^*}((0, x_2) \in D_1)$ concludes our proof.
\end{proof}

In the main text, we assume that $\beta_0 - \alpha \alpha_0 < 0$ and obtain a bound on the confidence which is independent from $\alpha_0$ and $\beta_0$. Using that bound allows to obtain Fig.~\ref{gradient_starvation}, but is partly unfair as the initialization of the network is already favoring the strong feature. 

However, we note that under small random initialization $z_{t^*}$ and $\alpha_{t^*}$ are of the same order of magnitude and $(\beta_0 - \alpha \alpha_0)$ is very small compared to $\log(\frac{1-\delta}{\delta})$. The additional term in the denominator thus has a limited effect on the exponential, gradient starvation is still happening (a fact confirmed by the experiment on the cats and dogs dataset). In the main text, Fig.~\ref{fig-main:gradient_starvation} plots the upper bound for a \textit{fair initialization} $\alpha_0 = \beta_0 = 0.1$ (in that case, we need to assume that $z_{t^*} = \alpha_{t^*}$).

\begin{minipage}{0.48\textwidth}
    \captionsetup{type=figure}
    \includegraphics[width=\textwidth]{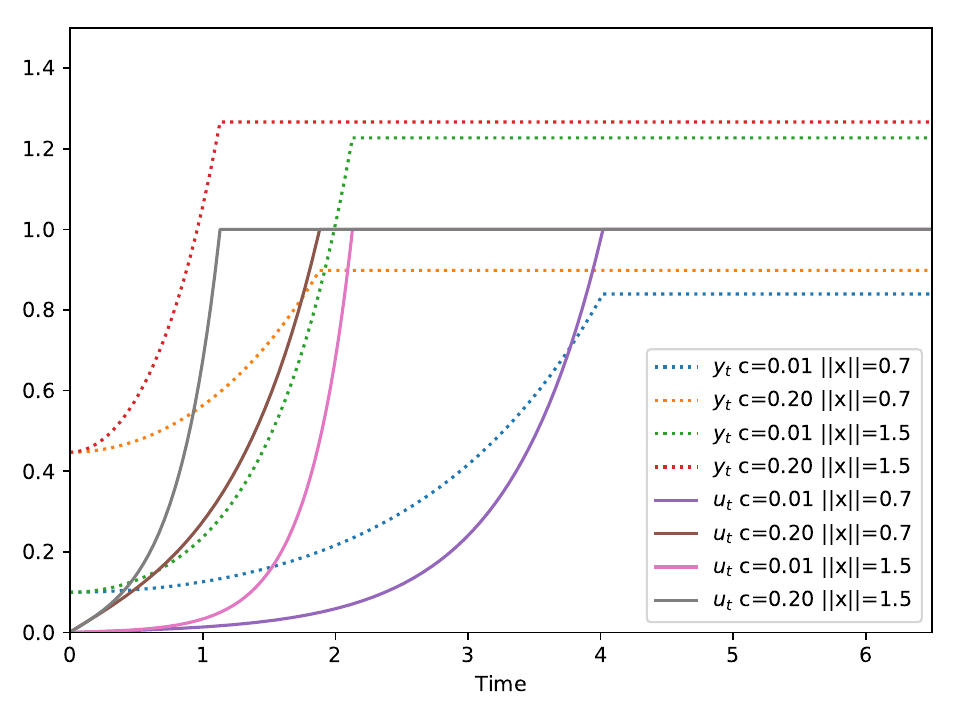}
\captionof{figure}{Solution of Eq.\ref{res_hinge}.}\label{plot_hinge}
\end{minipage}
\hfill
\begin{minipage}{0.48\textwidth}
    \captionsetup{type=figure}
    \includegraphics[width=\textwidth]{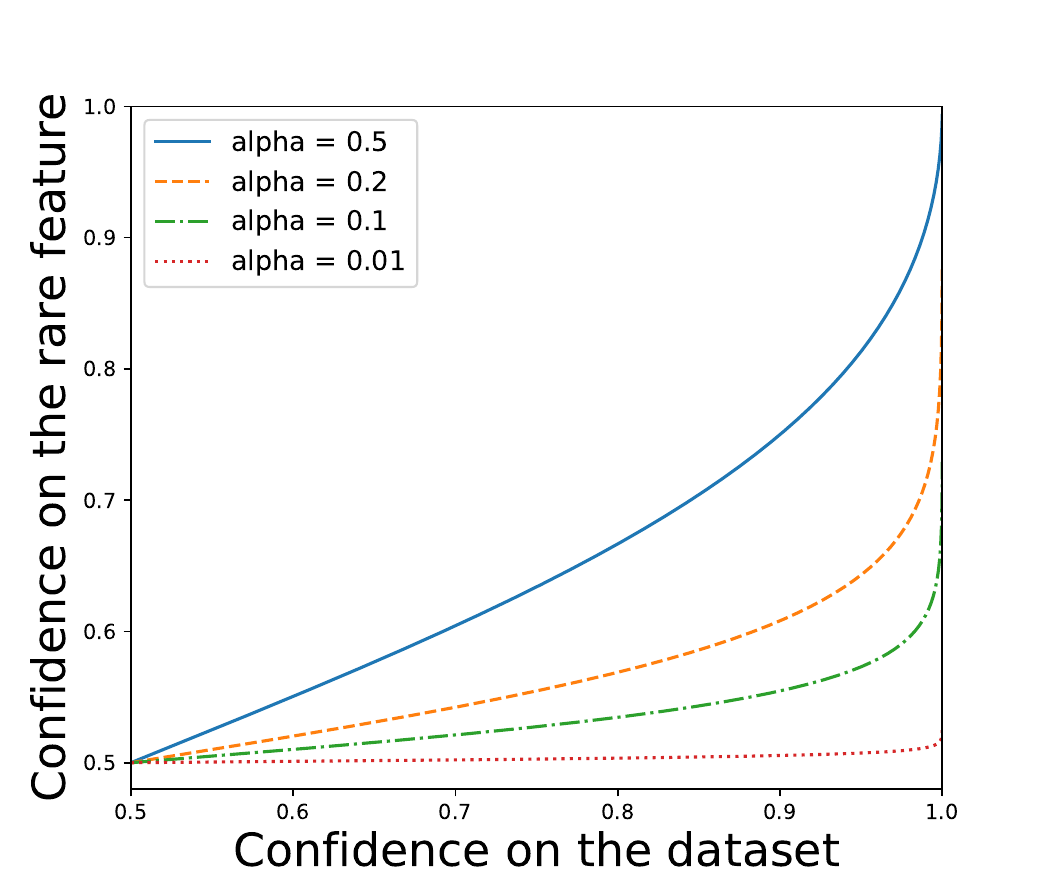}
\captionof{figure}{Upper bound on $P_{t^*}((0, x_2) \in D_1)$ as a function of $1-\delta$ for different values of $\lambda$.}\label{gradient_starvation}
\end{minipage}

\section*{Appendix \hypertarget{app-E}{E}: Experimental Details}
\setcounter{section}{5}
\setcounter{subsection}{0}

\subsection{Mixture of Gaussian experiment}

In this experiment, the data is constructed using eight independent Gaussian distributions around a unit circle. The variance of each Gaussian is chosen such that all eight modes of the data are separated by regions of low data probability, but still contain a reasonable amount of variance. This simple experiment resembles multi-modal datasets. Although this task might seem simple, in practice many generative adversarial networks fail to capture all the modes. This problem is generally known as \textit{mode collapse}. 

As shown in the main text, using the hinge loss instead of the common binary cross-entropy loss alleviates the problem significantly. The architectures used for the generator and discriminator both consist of four hidden layers where each layer has 256 hidden units. As a common choice, a ReLU is used as the non-linearity function for hidden units. The length of the noise input vector is 128. The Adam optimizer  \citep{adam} was applied during training with $\alpha=10^{-4}$, $\beta_1= 0.5$ and $\beta_2=0.9$. The PyTorch framework  \citep{pytorch} was used to conduct the experiment.

\subsection{Dogs vs. Cats  classification with light effect}

For the purpose of highlighting the fact that \textit{the most frequent feature starved all the others}, we conducted an experiment on a classification task. We modified the cats and dogs dataset \citep{catsdogs} by setting the cats images to be lighter than the dogs images. To do so, each pixel in a cat image is scaled to be between 0 and 127 while each pixel in a dog image is scaled to be between 128 and 255.  The dataset consists of 12500 images of each class. The classifier has an architecture similar to VGG16 \citep{Simonyan14c}. In order to isolate the effect of the induced bias, no regularization was applied. The Adam optimizer was applied here as well during training with $\alpha=10^{-4}$,  $\beta_1= 0.9$ and $\beta_2=0.99$. The PyTorch framework was used to conduct the experiment.

\end{document}